\RequirePackage{fix-cm}

\documentclass{svjour3}                     
\smartqed  
\usepackage{pgfplots}
\usepackage{float}
\pgfplotsset{compat=newest}
\pgfplotsset{plot coordinates/math parser=false}

\usepackage{graphicx}
\usepackage{amsmath}
\usepackage{mathtools}
\usepackage{xfrac}
\usepackage{pstricks,pst-node,pst-tree,pstricks-add}
\usepackage{booktabs}
\usepackage{algorithm}
\usepackage{algpseudocode}

%
%
%
%

\def \LA {Learning Automata }
\def \DLA{Distributed Learning Automata }
\def \lri{$L_{R-I} $}
\newcommand{\myvec}[1]{\vec{#1}}
\newcommand{\ra}[1]{\renewcommand{\arraystretch}{#1}}

\begin{document}

\title{Extended Distributed Learning Automata}
\subtitle{A New Method for Solving Stochastic Graph Optimization Problems }


\author{M.R.Mollakhalili Meybodi         \and        M.R.Meybodi 
}

\institute{M.R.Mollakhalili Meybodi \at
              Computer Engineering Department, Science and Research Branch, Islamic Azad University \\
              Tel.: +98-21-64542724\\
              Fax: +98-21-64542724\\
              \email{m.meybodi@srbiau.ac.ir}     
           \emph{Present address:} Islamic Azad University, Meybod Branch  
           \and
           M.R.Meybodi \at
              SoftComputing Lab. Amirkabir University Of Technology\\ \email{mmeybodi@aut.ac.ir} 
}

\date{Received: date / Accepted: date}
\maketitle
\begin{abstract}
In this paper, a new structure of cooperative learning automata so-called extended learning automata (eDLA) is introduced. Based on the proposed structure, a new iterative randomized heuristic algorithm for finding optimal sub-graph in a stochastic edge-weighted graph through sampling is proposed. It has been shown that the proposed algorithm based on new networked-structure can be to solve the optimization problems on stochastic graph through less number of sampling in compare to standard sampling. Stochastic graphs are graphs in which the edges have an unknown distribution probability weights. Proposed algorithm uses an eDLA to find a policy that leads to an induced sub-graph that satisfies some restrictions such as minimum or maximum weight (length). At each stage of the proposed algorithm, eDLA determines which edges to be sampled. This eDLA-based proposed sampling method may result in decreasing unnecessary samples and hence decreasing the time that algorithm requires for finding the optimal sub-graph. It has been shown that proposed method converge to optimal solution, furthermore the probability of this convergence can be made arbitrarily close to 1 by using a sufficiently small learning rate. A new variance-aware threshold value was proposed that can be improving significantly convergence rate of the proposed eDLA-based algorithm.  It has been shown that the proposed algorithm is competitive in terms of the quality of the solution
\keywords{Distributed Learning Automata (DLA)\and extended Distributed Learning Automata (eDLA)\and Stochastic Graph\and Stochastic Sub Graph\and Sampling }
\end{abstract}

\section{Introduction}
\label{intro}
Automata models for learning systems introduced in the 1960s were popularized by Narendra in\cite{Narendra1974}  as learning automata (LA).  Afterward, there have been many advances in the theory and applications of these learning models\cite{Thathachar2002}. Especially, groups of LA forming teams or feed forward networks have been shown to converge to desired solutions by proper choice of learning rate \cite{Thathachar2002,Meybodi2002}. 
\\Fundamentally, \LA are “simple agents for doing simple things”. The full potential of learning automaton is realized when a group of automata interact to each other to solve the specified problem. One of the interconnected structures of learning automata is \DLA, which in \cite{H.Beigy2006} the complete version of it was introduced and applied\cite{Beigy2003}.\\
DLA as a network of learning automata which collectively cooperate in the random environment to solve a particular problem, repeatedly has been used to solve various problems, especially issues related to random graphs with weighted edges or vertices\cite{H.Beigy2006,Meybodi2002,MollakhaliliMeybodi2004}.\\
 One of the obvious drawbacks of DLA, as a distributed multi-agent system, is that, next active agent is selected directly by the action taken by the current active agent. In other words, in DLA, at any time only one LA is active and able to take action on the environment. Next active LA specified by the action is chosen by the current active LA.   
 Although this network of LA has been able to solve various problems in random graphs such as finding shortest path\cite{MollakhaliliMeybodi2004,Beigy2003,A.Alipour2005,Motevalian2006}, or problems in other areas such as web documents clustering\cite{Saati2005}, web page rank\cite{Anari2007}, link prediction in adaptive web sites\cite{MollakhaliliMeybodi2008}, user modeling in adaptive hypermedia\cite{MollakhaliliMeybodi2012}, web usage mining \cite{BaradaranHashemi2007} and so on, but LA activation mechanism in DLA, has limited its application. \\
 In this paper, the extended approach to activation mechanism of LAs in DLA is introduced, referred to as eDLA. Furthermore we introduce an algorithm based on eDLA to finding optimal sub graph in random graphs. The rest of paper is organized as follows. In section 2 we first briefly deal with learning automata theory, graph theory and stochastic graph. In addition to this, eDLA is introduced in this section. In section 3, our eDLA-based proposed algorithm is presented. Section 4 is aimed at proving the convergence of the proposed algorithm. Section 5 evaluates the performance of the proposed algorithm through the simulation experiments. Finally in the last section we discuss the findings of this paper and present ideas for future researches.
\section{Random Graph, Learning Automata and DLA}
\label{sec:2}
In order to better understand the proposed algorithms and methods discussed in this paper, the concepts of random graph, learning automata and distributed learning automata are briefly described in this section.

\subsection{Stocahstic Graph:}
\label{sec:StochasticGraph}

A stochastic graph G is denoted by a triple $G=(V,E,Q)$ where $E \subseteq  V \times V$ is a set of edges, $ V = \{v_1, v_2, ... , v_n\}$
is a set of nodes,  and $n \times n$ matrix $Q_{n \times n}$ is the probability distribution describing the statistics of edge lengths where $n$ is the number of nodes. For each edge $e_{(i,j)} \in E$ associated weight $w_{ij}$ is assumed to be a positive random variable with $q_{ij}$ as its probability density function (PDF), which is assumed to be unknown in this paper. As a result, any propsed algorithm in this paper is
based on the assumption that the $q_{ij}$ isnot known apriori.
\subsection {Stochastic Learning Automata:}
\label{LA}
 
A stochastic learning automaton is an adaptive decision making unit that improves its performance by learning how to choose the optimal action from a finite set of allowable actions through repeated interactions with a random environment. 
\\At each instant, automata choose an action from its available actions, based on a probability distribution kept over the action set. Selected action is served as the input to the random environment. The environment responds to the taken action with a reinforcement signal called as stochastic response. Based on the reinforcement feedback from the environment, the action probability vector is updated based on learning process which is known as reinforcement scheme. The objective of this process is to find the optimal action from the actions-set so that the average penalty received from the environment is minimized. 
\\From a mathematical point of view, a \LA can be considered as a finite state machine that can be described by a 5-tuple as follow:
\begin{equation}
\label{equations.SLA}
S\equiv \{\myvec{ \alpha} ,\myvec{\beta},F,G,\myvec{\varphi}\} 
\end{equation}
where:

\begin{itemize}
  \item $\myvec{\alpha} \equiv \{\alpha_1,\alpha_2,\ldots,\alpha_r\}$ is a set of actions (or outputs of automaton). The output or action of an automaton at the instant $n$ is an element of finite set \myvec{\alpha} which is denoted by $\alpha(n)$ .
  \item $r$ is the number of available actions
 \item $\myvec{\beta} \equiv \{\beta_1,\beta_2,\ldots,\beta_m\}$ is a set of responses from environment (or inputs of automaton). The input $\beta(n)$ from the environment is an element of set $\myvec{\beta}$ which could be either a finite set or infinite set such as interval on the real line.
\item $F \equiv \varphi \times \beta \longrightarrow \varphi$ is a function that maps the current state and input into the next state
\item $G \equiv \varphi  \longrightarrow \alpha$ is a function that maps the current state to next output
\item  $\myvec{\varphi(n)} \equiv \{\varphi_1,\varphi_2,\ldots,\varphi_k\}$ is a set of internal states of automaton at any instant $n$
\end{itemize}
 $\myvec{\alpha}$ is a set of outputs (actions) of automaton. At each step, automaton chooses one of these actions ($r$ is the number of possible actions) and applies it to the environment. $\myvec{\beta}$ identifies set of inputs to the automaton. $F$ and $G$ are functions that map the current input to the next output (next action). If $F$ and $G$ are deterministic, then automaton is a deterministic automaton. In a deterministic automaton if current state and input of automaton are given, then next state and output of automaton are to be determined uniquely. If the mappings $F$ and $G$  are nondeterministic, automaton is a stochastic automaton (or SLA). Stochastic Learning Automata can be classified into two main families: Fixed Structure (FSLA) and Variable Structure (VSLA). \\
\subsubsection{Environment:}
The mathematical model of an environment can be described by a 3-tuple $ \{\myvec{ \alpha} ,\myvec{\beta},\myvec{c}\} $ where 
\begin{itemize}
\item $\myvec{\alpha} \equiv \{\alpha_1,\alpha_2,\ldots,\alpha_r\}$ is a final set of inputs
\item $\myvec{\beta} \equiv \{\beta_1,\beta_2,\ldots,\beta_m\}$ is a set of values that can be taken by the reinforcement signal
\item $\myvec{c} \equiv \{c_1,c_2,\ldots,c_r\}$ is a set of the penalty probabilities where the element $c_i$ is associated with the given action $\alpha_i$
\end{itemize}
Based on the penalty probabilities, random environments can be classified into stationary and non-stationary ones. In a stationary random environment the penalty probabilities are constant, whereas in a non-stationary random environment these probabilities are varying with time.
Based on the number of reinforcement signal, $\myvec{\beta}$, the environments can be classified into P-model-model, Q-model and S-model. The environments, in which the reinforcement signal can only take two binary values 0 and 1, are referred to as P-model environments. Another class of the environment allowing a finite number of the values in the interval [0,1] can be taken by the reinforcement signal. Such an environment is referred to as Q-model. In S-model environments, the reinforcement signal lies in interval [a,b]. \\
In a P-model environment penalty probability associated with an action can be represented as follow:
\begin{equation}
\label{LA.penaltyEquation}
c_i \equiv Prob\{\beta(n)=1|\alpha(n)=\alpha_i\} , i \in\{1,2,\ldots,r\}
\end{equation}
The relationship between the learning automaton and its random environment has been shown in figure \ref{figures.LAandEnvironment} 
\begin{figure}
\centering
\includegraphics[scale=0.5]{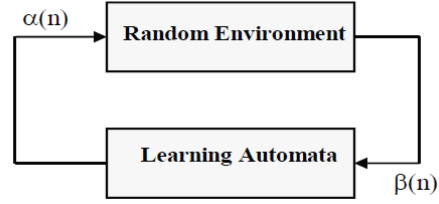} 
\caption{The relationship between the learning automaton and its random environment}
\label{figures.LAandEnvironment}
\end{figure}

Variable structure learning automata are represented by a triple$(\myvec{\beta},\myvec{\alpha},T)$ where $\myvec{\beta}$ is the set of inputs, $\myvec{\alpha}$ is the set of actions, and $T$ is learning algorithm. The learning algorithm is a recurrence relation which is used to modify the action probability vector. 
\subsubsection{Linear Learning Algorithm: }
\label{LinearLearningAlgorithm}
Let  $\alpha_{i}(k) \in \myvec{\alpha}$ and $\myvec{p}(k)$ denote the action chosen at instant $k$ and the action probability vector on which the chosen action is based, respectively. The recurrence equation shown by (\ref{equation.LA.reward}) and (\ref{equation.LA.penalt}) is a linear learning algorithm by which the action probability vector  $\myvec{p(k)}$ is updated. Let be the action chosen by the automaton at instant k. 

\begin{equation}
\label{equation.LA.reward}
p_j(k+1)=\begin{cases}
       (1-a)\times p_j(k)+a       &j=i\\
      (1-a)\times p_j(k)     & \forall j \neq i
      \end{cases}
\end{equation}
When the taken action $\alpha_{i}(k)$   is rewarded by the environment (i.e.,$\beta(k)=0$ ) and 
\begin{equation}
\label{equation.LA.penalt}
p_j(k+1)=\begin{cases}
       (1-b)\times p_j(k)       &j=i\\
      (1-b)\times p_j(k) +\frac{b}{r-1}   & \forall j \neq i
      \end{cases}
\end{equation}
When the taken action $\alpha_{i}(k)$   is penalized by the environment (i.e.,$\beta(k)=1$ )\\
r is the number of actions chosen by the automaton. If $a=b$, the  recurrence equations (\ref{equation.LA.reward}) and (\ref{equation.LA.penalt})are called linear reward-penalty ($L_{R-P}$) algorithm. If $a\gg b$  then the given equations are called linear reward-$\epsilon$penalty ( $L_{R-\epsilon P}$), and finally if $b = 0$ then they are called linear reward-Inaction ($L_{R-I}$ ). In the latter case, the action probability vectors remain unchanged when the taken action is penalized by the environment. 
\subsubsection{Variable action set Learning Automaton: }
\label{variableactionset}
If the number of actions of learning automaton is varying in time, it is called the variable action set learning automaton. The absolute expediency and $\epsilon$-optimality of this type of learning automata under the $L_{R-I}$  reinforcement scheme is shown in \cite{ThathacharMALandHarita1987}.
Assume that $\myvec{\alpha} \equiv \{\alpha_1,\alpha_2,\ldots,\alpha_r\}$ denotes the set of actions of a learning automaton and $V(n) \subset \myvec{\alpha} $ is a non-empty subset of the actions at time . $V(n)$ represents the available (or selectable) actions of the learning automaton at any time $n$  that is called active actions. Selecting the elements of $V(n)$  is done randomly by an external factor. The procedure of selecting an action and updating the action probability vector in this type of learning automaton can be described as follows: let at time$n$ , $V(n)$  be the set of active actions and $K(n)=\sum_{\alpha_i \in V(n)} p_i(n)$ represents the sum of probability of active actions. Before the select of an action, the active actions probability vector is \textit{scaled} by the equation (\ref{equation.VASScale})
\begin{equation}
\label{equation.VASScale}
\forall \alpha_i \in V(n)  :  \hat{p}_i(n)=\frac{p_i(n)}{K(n)}
\end{equation}
Afterward, the learning automaton randomly selects an action according to the scaled action probability vector $\hat{\myvec{p}}(n) \equiv \{\hat{p}_i(n)|\alpha_i \in V(n) \} $. According to the received response from the environment, scaled action probability vector $\hat{\myvec{p}}(n)$ is updated. Finally the active action probability vector  $\hat{\myvec{p}}(n)$ is \textbf{\textit{rescaled}} as mentioned in the equation (\ref{equation.VASrescale})
\begin{equation}
\label{equation.VASrescale}
\forall \alpha_i \in V(n)  :  \hat{p}_i(n)=p_i(n) \times K(n)
\end{equation}

\subsubsection{Distributed Learning Automata: }
Distributed Learning Automata (DLA) is a network of automata which collectively cooperate to solve particular problem. A DLA can be modeled by a directed graph in which the set of nodes of graphs constitutes the set of automata and the set of outgoing edges for each node constitutes the set of actions for corresponding automaton. When an automaton selects one of its actions, another automaton on the other end of the edge corresponding to the selected action will be activated. An example of DLA is given in figure \ref{figures.DLA}. In this example, if automaton $LA_1$ selects action $\alpha {13}$ , then automaton $LA_3$ will be activated. Activated automaton $LA_3$ chooses one of its actions and so on. At any time only one automaton in the network will be activated. 

\begin{figure}
\centering
\includegraphics[scale=0.6]{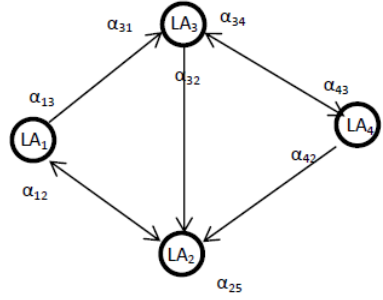} 
\caption{Distributed Learning Automata}
\label{figures.DLA}
\end{figure}

Formally, a DLA can be embedded in a graph and can be defined by a 4-tuple  $(\myvec{A},E,T,A^0)$ where $\myvec{A}=\{A_1,A_2,\ldots,A_n\}$is the set of learning automata, $E \subseteq A \times A$ is the set of the edges in which edge $e_{(i,j)}$ corresponds to the action $a_{ij}$ of the automaton  $A_i$.T is the set of learning schemes with which the learning automata update their action probability vectors, and $A_0$ is the root automaton of DLA from which the automaton activation is started. The operation of DLA can be described as follows: 
\begin{itemize}
\item   On each activation of root node, one of its outgoing edges (one action of root automaton) is chosen on the basis of action probability vector of root node automaton. 
\item The selected edge activates one learning automaton on the other end of the selected edge 
\item This automaton also selects an action which results in activation of another automaton 
\item This process is repeated until a leaf node is reached
\item The leaf node, is a node that interacts to environment 
\end{itemize}
Restricted version of DLA is introduced at first in \cite{Sato1999} where underlying graph is a DAG and $L_{R-I} $ algorithm with decaying reward parameters is used as learning algorithm. Meybodi-Beigy introduce a DLA in which the underlying graph is not restricted to a dag, but in order to restrict a learning automaton to appear more than once in any path, the LA with changing number of actions are used \cite{H.Beigy2006}
\section{Extended Distributed Learning Automata: eDLA }
\label{section.eDLA}
eDLA, is a network of interconnected cooperative learning automata, supervised by a set of communication rules governing the order of operation of Learning Automaton. In eDLA, each learning automaton has an activity level, changing according to the problem solved by eDLA and communication rules. At any time, only one automaton has high activity level, which can perform an action on the environment.\\
Formally, an eDLA similar to DLA, can be embedded in a graph and defined by a 7-tuple $eDLA \equiv \{A,E,S,P,S^0, F,C \}$
where A is a set of vertices of graph and E is the set of edges of $G \equiv (A,E)$ so called as \textit{communication graph}.
$S \equiv \{s_1,s_2,\ldots,s_n\}$ is a set of activity level corresponding to each of learning automaton in eDLA and activity level of learning automaton  $A_i$ , is defined by $s_i$. Each automaton can be in one of the following activity levels: \textbf{\textit{Passive}}, \textbf{\textbf{Active}}, \textbf{\textit{Fire}} and \textit{\textbf{Off}} represented by \textbf{Pa}, \textbf{Ac}, \textbf{Fi} and \textbf{Of} respectively.  
\\
\textit{Fi} is the highest level of activity of \LA and \textit{Pa} and \textit{Of} are the lowest level of it. The difference between \textit{Pa} and \textit{Of} is that, when a \LA downgrade to \textit{Of} level of activity, at this level, cannot change to other levels, but in \textit{Pa} level, upgrades to higher level is possible. At any time only one of the automata in the eDLA will be in \textit{Fi} level of activity that called ‘\textit{Fire Automaton}’. In an eDLA,  is the finite set of rules that governs the activity levels of each automaton. These rules based on the current activity level of each automaton and activity levels of its parents or adjacent automata in G, determines the next level of activity. These rules may vary depending on the problem being solved by eDLA. 
\\
$S^0 = (s_1^0,s_2^0,\ldots,s_n^0)$ is called the initial state of the eDLA. $F \equiv \{ S^F |S^F = (s_1^F,s_2^F,\ldots,s_n^F) \}$ is called “final conditions”. F is a set of circumstances in terms of activity levels of automata, that if realized at least one of them, eDLA is transferred to the final state. Obviously  has at least one item, that is a situation where activity of all automata in eDLA is \textit{Of}.
C is a special function that selects one automaton in eDLA. This selection is done based on the current activity level of each automaton in eDLA and the associated problem that eDLA is designed to solve it. This function is called Fire Function or briefly Fire. In the following discussion we will explain more this function. 
The number of actions for a particular automata, is equal to degree (in situation that G is a undirected graph) or out degree (if G is a directed graph) of corresponding node in communication graph G. 
\\The eDLA works as follows: 
\begin{itemize}
\item   At first, a network of learning automaton which is isomorphic to an input graph is created. In this network, each node is a learning automaton and each edge (or outgoing edge) of this node is one of the actions of this learning automaton. 
\item ·The eDLA starts from state $S^0$ and based on the rules $P$ , the activity level of each learning automaton in eDLA changes. These changes make the state of eDLA change from $S^0$ to $S^1$ . The transition from state  $S^i$ to  $S^{i+1}$ continues until the eDLA reaches to a final state.
\item At each time, a node that has a high activity level (\textit{Fi}), selects an action from its available actions set and applies it to the environment. After this, fire automaton switches to low activity level \textit{Of}. Moreover, based on the communication rules, a set of automata that adjacent to \textit{Fire Automaton} and has a lowest level of activity (\textit{Pa}), upgrades to next high level of activity (\textit{Ac}). The next  fire automaton is selected from the set of automata with \textit{active} level (\textit{Ac}) of activity. This selection often is done randomly. 
\end{itemize}
\paragraph{Run}: A run is the change of activity level of all automata in an eDLA to \textit{Of} level until at least one of the final conditions is realized. The activity-level change is done based on the activity-change-cycle $Passive \rightarrow Active \rightarrow Fire \rightarrow Off $ . At the beginning of each run, there is at least one automaton with Ac level of activity. This automaton is called the root and is represented by . All of these automata are denoted by $S^0$ . 
\paragraph{Instantaneous Description:} is an ordered 4-tuple $D^t=(D_{Of}^t,D_{Fi}^t,D_{Ac}^t,D_{Pa}^t)$ where $D_i^t \subseteq A (\forall i \in  \{Pa,Ac,Fi,Of\})$ and $D_i^t \bigcap D_j^t =\phi (\forall i ,j \in  \{Pa,Ac,Fi,Of\})$
\\When a learning automaton in the eDLA performs one of its actions, instantaneous description will change based on the\textit{ P} rules set. Moreover, the instantaneous description changes when the activity level of a learning automaton in eDLA changes to \textit{Fi} level. Initial instantaneous description, $D^0$, of eDLA is defined as $D^0=(D_{Of}^0,D_{Fi}^0,D_{Ac}^0,D_{Pa}^0)=(\phi,\phi,\{A_0\},A-\{A_0\})$ and default final instantaneous description is defined as $D^{final}=(D_{Of}^{final},D_{Fi}^{final},D_{Ac}^{final},D_{Pa}^{final})=(A,\phi,\phi,\phi)$
Based on the above definitions and notations, a run in eDLA can be described by a sequence of instantaneous descriptions, such as follows: 
$Run \equiv D^0 \succ^{fire}  D^1 \succ^{action} D^2 \succ^{fire} \ldots \succ D^n \in F  $

\begin{figure}
\centering
\includegraphics[scale=0.7]{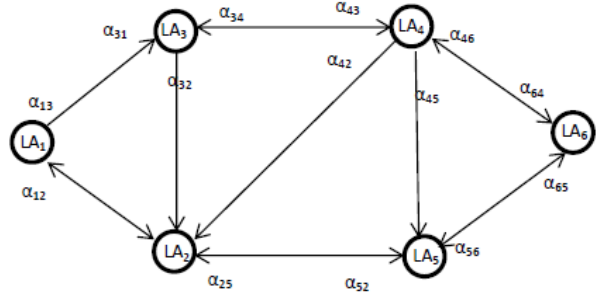} 
\caption{Communication graph of an eDLA}
\label{figures.exampleofeDLA}
\end{figure}
For a better understanding of definitions, see figure \ref{figures.exampleofeDLA} for an example of a run 

$
(\phi,\phi,\{1\},\{2,3,4,5,6\}) \succ^{fire(1)} (\phi,\{1\},\{2,3\},\{4,5,6\}) \succ^{action(a_{12})}\\
(\{1\},\phi,\{2,3\},\{4,5,6\}) \succ^{fire(3)} (\{1\},\{3\},\{2,4\},\{5,6\})\succ^{action(a_{32})} \\
(\{1,3\},\phi,\{2,4\},\{5,6\}) \succ^{fire(4)} (\{1,3\},\{4\},\{2,5,6\},\phi)\succ^{action(a_{46})} \\
(\{1,3,4\},\phi,\{2,5,6\},\phi) \succ^{fire(2)} (\{1,3,4\},\{2\},\{5,6\},\phi)\succ^{action(a_{25})} \\
(\{1,3,4,2\},\phi,\{5,6\},\phi) \succ^{fire(6)} (\{1,3,4,2\},\{6\},\{5\},\phi)\succ^{action(a_{65})} \\
(\{1,3,4,2,6\},\phi,\{5\},\phi) \succ^{fire(5)} (\{1,3,4,2,6\},\{5\},\phi,\phi)\succ^{action(a_{32})} \\
 \succ^{no action} (\{1,3,4,2,6,5\},\phi,\phi,\phi)\\
$

Thus, the eDLA is a dichotomous structure consisting of a fire function and a network of learning automata as a team that cooperate together to solve a specific problem. The fire function is differing depending on the problem, but in general the task of fire function is to determine the automaton which should do an action in a random environment. Figure(\ref{figures.eDLAStructure}) shows the general structure of an eDLA. 

\begin{figure}
\includegraphics[scale=0.8]{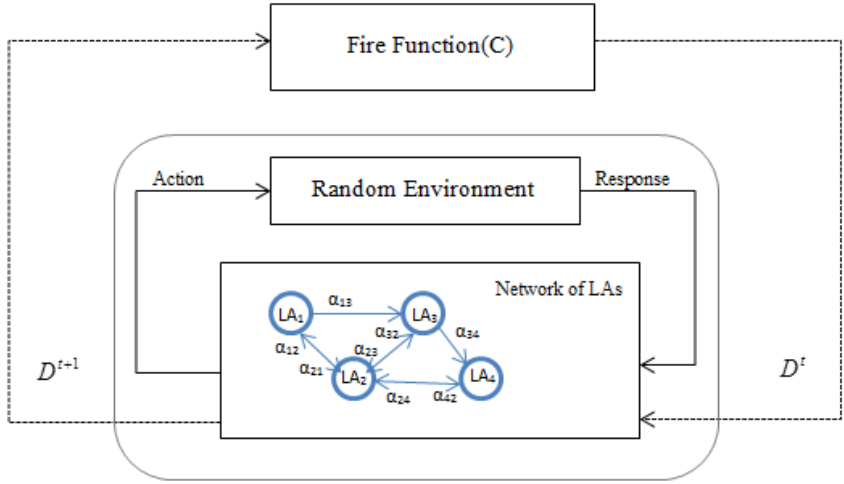} 
\caption{General structure of an eDLA }
\label{figures.eDLAStructure}
\end{figure}
 
 \subsection{Solving stochastic induced sub-graph problem by eDLA}
 \label{subsection.proposedAlgorithm}
As mentioned earlier, stochastic graph is a suitable model for describing many real-world problems. For example, in computer networks, communication links between switching elements such as routers and switches can be modeled as a random edge-weighted graph. Suppose we want to find an induced sub-graph with certain restrictions in such stochastic graph. For example we might to find a sub-graph that is spanning tree with minimum total weight. Finding induced sub-graph that satisfies certain restrictions in a stochastic graph with unknown distribution random edge-weighted, in general is a \#P-Hard problem and we called this as solving stochastic induced sub-graph problem in the rest of this paper. One solution to this problem is sampling from edges in order to estimate deterministic mean value of edge weight and then to solve problem in deterministic equivalent graph obtained by replacing stochastic edge by resulting deterministic edge. This requires a large number of samples. Another solution to this problem is purposeful sampling. One of these methods is the sampling by eDLA. In this way, by using an eDLA, sampling is done from more efficient edges. This sampling method may result in running time of algorithm by decreasing unnecessary samples. The general procedure used by eDLA-based algorithm can be shown in Algorithm(\ref{figures.eDLAAlg})

\begin{algorithm} 

\begin{algorithmic}[1]
\caption{:e\textsc{DLA-Based Algorithm Pseudo Code}}
\label{figures.eDLAAlg}
        \Require Stochastic Graph $\mathcal{G}= (V,E)$, $P_s$; $K_s$; 
        
        \State Assign a LA to each node of $\mathcal{G}$ and make an eDLA=(V,E) 
        \State Let S denotes the constructed sub-graph;
        \State Let $W_s$ denotes the weight of constructed sub-graph;
        \State $P \gets 0$
        \State $K \gets 0$
        \While{$K \leq K_s \vee P \leq P_s$} 
            \State $S \gets \phi;$
            \State $W_s \gets 0;$
            \State $v$=ROOT() \Comment ROOT() determines a  start node according to the problem 
            \State $A_P \gets V-\{v\}$ \Comment  $A_P$ = set of LA with ``Passive'' level
            \State $A_A \gets {v}$  \Comment $A_P$ = set of LA with ``Active'' level
            \State $A_O \gets \phi$  \Comment $A_O$ = set of LA with  ``Off'' level
            \State $A_F \gets Fire(eDLA);$ \Comment $A_F$ is a LA that has ``Fire'' level
            \While{$A_F \neq \phi$}
                \State $A_A=Update(A_F,G)$ \Comment Activate according to adjacency with  $A_F$
                \State $\alpha \gets SelectAction(A_F);$
                \State disable action $\alpha$ in adjacent active automata \Comment for prevent loop cration
                \State $S \gets S \cup edge(A_i,\alpha);$
                \State $W_s \gets W_s + SampledWeight(edge(A_F,\alpha));$
                \State $A_O \gets A_O \cup A_F;$           
                \State $A_F \gets Fire(eDLA);$    
            \EndWhile
            \State $K \gets K+1$
            \State response=Evaluate($W_s$,Threshold,K); \Comment response=reward or penalty
            \ForAll{$LA \in A_O$}          
            \If{$resonse= reward$}
                 \State reward selected action by LA \Comment by \lri scheme
            \EndIf
                \State enable all disabled actions \Comment for next run            
            \EndFor
            \State Threshold=Update($W_s$,Threshold,K);
            \State $P=\prod{P(A_m,\alpha_n)} \;\;\; \forall \;{edge(m,n) \in W_s}$      
         \EndWhile
\end{algorithmic}
\end{algorithm}

 As can be seen in the pseudo-code in Algorithm(\ref{figures.eDLAAlg}), several functions have been used in the algorithm: 
 \begin{itemize}
 \item \textit{Fire function}: this function, at each stage, indicates one automaton as fire automaton according to the activity level of each automaton and communication rules used by eDLA (e.g. it selects randomly an automaton from automata with \textit{‘Ac’} level of activity). Furthermore, based on the P rules of eDLA and fire automaton, other adjacent automata with fire automaton, which have \textit{‘Pa’} level of activity upgrade to \textit{‘Ac’} level 
 
 \item \textit{Selectaction function}: this function takes the number of current fire automaton, as input parameters, and selects one action from available actions set of this automaton. This selection is done according to the selection action probability vector of specified automaton. After this, activity level of specified fire automaton downgrade to \textit{‘Of’}. After selecting an action, and if the induced graph must be acyclic, the actions that can cause loop must be disabled in the set of actions in other automata with \textit{‘Ac’} level of activity. Algorithm which used to avoid the creation of loop will be described later. 
 \item \textit{Evaluate function}: This function is used by environment to evaluate selected action of eDLA. The action of eDLA is a sequence of actions that represents a particular induced sub-graph in the stochastic graph. The environment uses the sample length of this induced-graph ($W_S$) to produce its response. This response, depending on whether it is unfavorable or favorable, causes the actions corresponding to induced sub-graph edges, penalized or rewarded respectively. After evaluation of eDLA action, all of the disabled actions are enabled and activity level of all automata in eDLA is set by \textit{‘Pa’}. 
 \item \textit{Update function}: let us assume the induced sub-graph $s_i$  is selected at stage $k$ the average weight of all previously constructed sub-graphs by eDLA is called dynamic threshold and denoted by $T$.  At each stage $n > 1$ dynamic threshold $T$ is computed as 
 \begin{equation}
T=T_n= \frac{1}{k} \displaystyle\sum_{j=1}^{n}W_{s_i}
\label{equation.thresholdDefine}
 \end{equation}
 Where in (\ref{equation.thresholdDefine}), $W_{s_i}$ denotes the weight of sub-graph $s_i$ and defined as $W_{s_i}=\sum_{\forall e_{(s,t)} \in s_i}{w(s,t)}$. $w(s,t)$ is a positive random variable and show the wight of $e_{(s,t)}$ at $k^{th}$ iteration of algorithm.
 \end{itemize}
The process of doing \textit{Fire} function and \textit{Selectaction}  function is repeated until all automata in eDLA downgrade to \textit{‘Of’} activity level. After this, length of the induced sub-graph is computed and compared with the dynamic threshold. Depending on the problem associated to eDLA and constraint that should be satisfied by sub-graph and the result of this comparison, all the learning automata with \textit{‘Of’} activity level update their action selection probability vector (\textit{Evaluate} function) and dynamic threshold is updated according to \textit{Update} function described above
The process of constructing sub-graph is repeated until the stopping condition is reached where at this point the last constructed sub-graph is the sub-graph that satisfies specific constraint among all induced sub-graphs. Stop condition can be either the probability of choosing the edges of the sub-graph called the sub-graph probability greater than a certain pre-defined threshold or pre-specified numbers of sub-graphs selected. 
\subsection{Improvement:}
our Studies show that as the dynamic threshold value is close as to the weight of optimal sub-graph, eDLA learning ability is reduced. To solve this problem, rather than the dynamic threshold which is an average of obtained sub-graph weights, we use  a different criterion. This criterion should be able to consider mean and variance of weight of sampled sub-graphs. However in order to reduce the computational complexity of proposed improvement and applicability of these:
\begin{itemize}
\item First, we need to use a simple estimate such as stochastic gradient for mean and variance 
\item	Secondly, the weight of individual edge is not observable and the total weight, given by the sum of the weights of all edges on the sub-graph is subsequently observed

\end{itemize}
Thus, the stochastic graph is considered as a black box and at each step, any proposed improvement, only has access to weight of current sampled sub-graph, approximated mean and variance of the previously sampled sub-graphs. In this way, no additional assumption imposed on the stochastic graph and learning automata simplicity is not violated.
\\
To obtain a new criterion, at each stage $k$  we have to do this: 
\begin{itemize}
\item The $Err_k$  is calculated as differences between the weight of obtained sub-graph ($W_k$) at stage $k$   and estimated mean of previously obtained sub-graph weights ($T_{k-1}$). In other words $Err_k=W_k-T_{k-1}$ 
\item The estimated value of mean is updated by  $T_k=T_{k-1}+\alpha \times Err_k$    
\item The estimated value of variance is update by equation  $$Var_k=Var_{k-1}+\beta\times(|Err_k|-Var_{k-1})$$ It is clear that $Var_k$  estimates the mean deviation instead of standard deviation and we have $mdev^2=(\sum{|W-T|})^2 \geq \sum{{|W-T|}^2}=\delta^2=sdv^2$. This shows that mean deviation is greater than standard deviation and hence it is more conservative than the standard deviation
\end{itemize}
In order to evaluate eDLA action in proposed algorithm instead of dynamic threshold we use a linear combination of estimated mean and estimated variance. In minimization problem we use $\sfrac{T_k}{2}+2\times Var_k$ as upper bound for comparison. In maximization problem we use $\sfrac{T_k}{2}-2\times Var_k$ as lower bound for comparison. 

\section{Asymptotic Behavior of eDLA-based Proposed Algorithm: }
\label{section.mathematics}
Learning automata can automatically improve their behavior based on a response from a random stationary environment, but when connected with each other, their behavior becomes complex and hard to analyze \cite{Sato1999}. In this section we analyze the behavior of eDLA-based proposed algorithm (Algorithm(\ref{figures.eDLAAlg})) for finding optimal sub-graph in stochastic graph. For this purpose, we use the similar method proposed by \cite{Narendra1974} and \cite{Lakshmivarahan1976} to study the convergence of algorithm and in \cite{H.Beigy2006} used to study the asymptotic behavior of DLA. Basically two types of convergence for different reinforcement schemas are reported in the literatures\cite{Narendra1974}. In first mode of convergence that occurs typically in the case of expedient schemas (e.g. $L_{R-P}$), the distribution functions of the sequence of action probabilities converge to a distribution function. For example, it can be shown that when the $L_{R-P}$ algorithm is used, the selection action probability $p(n)$ converges to a random variable with a continuous distribution and the variance will be computable and hence can be made as small as desired by the proper choice of parameters of  $L_{R-P}$ scheme. A second and stronger mode of convergence occurs in the case of $\epsilon$-optimal schemes (such as  $L_{R-I}$ scheme). In this mode by using the martingale convergence theorem it can be proved that the sequence of action probabilities converges to a limiting random variable with probability one \cite{Lakshmivarahan1976}
 \\ In \cite{H.Beigy2006}, authors introduce path probability concept in DLA. Then by a method similar to \cite{Norman1968} they prove that if the graph has unique shortest path, the path probability of the shortest path converges to one in the proposed algorithm and hence the proposed algorithm converges to shortest path with probability as close as to unity. In this section we will use the proposed method in \cite{H.Beigy2006} to prove convergence of our proposed algorithm. 
 \\Throughout this section we consider following notations: assume that in the proposed algorithm (Algorithm(\ref{figures.eDLAAlg})) up to stage $k$,  sub-graph $s_i$ (for i=1,2, \ldots ,r which r is the number of sub-graphs) is selected $k_i$ times, where $k=\sum_{i=1}^{r}k_i$. Also assume that sub-graph $s_i$ is selected at stage $k+1$. Let $W_{s_i}(j)$ denotes the weight of $s_i$ (for i=1,2, \ldots ,r)  at the $j ^{th}$ sampling time (for j=1,2, \ldots ,k) according to proposed algorithm, we have:
 \begin{equation}
 \label{equation.rewardpenalt.definition}
 \begin{cases}
  d_i(k)=Prob[W_{s_i}(k+1)\; Satisfies\; T_k] \\
 c_i(k)=Prob[W_{s_i}(k+1)\; notSatisfies\; T_k] =1-d_i(k)
 \end{cases}
 \end{equation}
 
 Where $c_i(k)$ and $d_i(k)$ denote penalty and reward probabilities of induced sub-graph $s_i$  at stage $k$ , respectively and $T_k$ is the dynamic threshold value as described in Equation(\ref{equation.thresholdDefine}) (section \ref{subsection.proposedAlgorithm})
 The probability of induced sub-graph $s_i$ defined as the product of probability of choosing the edges of $s_i$  and denoted by $q_i$. $q_i(k)$ represents the probability of choosing sub-graph $s_i$ at the stage $k$
 \begin{theorem}
 \label{theory.mainTheorem}
 Suppose that $s_i$ is the optimal induced sub-graph. If  $\myvec{q} = [q_i]_{i=1,2,\ldots,r}$ is updated according to the proposed algorithm then  $\lim_{n \rightarrow \infty}  q_i(n)\stackrel{a.s.}{=}1$
 \end{theorem}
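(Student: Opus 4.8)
The plan is to mirror the submartingale argument of \cite{Norman1968} and \cite{Lakshmivarahan1976}, in the networked form used for DLA in \cite{H.Beigy2006}, working directly with the sub-graph probability $q_i(n)$. Two coupled random processes must be controlled: the action-probability vectors of the automata sitting on the edges of $s_i$, whose product is $q_i(n)$, and the dynamic threshold $T_n$ of Equation~(\ref{equation.thresholdDefine}). First I would handle the threshold. Since $T_n$ is a running average of the observed sub-graph weights (each a positive random variable of finite mean), a law-of-large-numbers / stochastic-approximation argument shows that on any sample path along which the selection frequencies settle, $T_n$ converges to a finite limit $T^{\ast}$, hence is eventually trapped in an arbitrarily small neighbourhood of $T^{\ast}$. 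Because $s_i$ is the optimal (say, minimum-weight) sub-graph, its weight is stochastically smaller than that of any competitor, so with respect to the limiting threshold the reward probability $d_i(k)$ of Equation~(\ref{equation.rewardpenalt.definition}) dominates $d_j(k)$ for every $j\ne i$; this dominance is the only place the optimality hypothesis is used.

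Next I would build the martingale. Let $\mathcal{F}_n$ be the history up to the end of run $n$ and write $q_i=\prod_{e\in s_i}p_e$, where $p_e$ is the probability that the automaton incident to $e$ picks, at the instant it fires, the action corresponding to $e$. Conditioning on which sub-graph is constructed at run $n+1$ and on the environment's response, and substituting the \lri{} updates (\ref{equation.LA.reward})--(\ref{equation.LA.penalt}) with $b=0$ together with the variable-action-set rescaling (\ref{equation.VASScale})--(\ref{equation.VASrescale}) forced by the loop-prevention step, I would compute $E[\,q_i(n+1)-q_i(n)\mid\mathcal{F}_n\,]$ term by term: a penalty leaves $q_i$ unchanged, a reward of $s_i$ strictly raises it, and a reward of some $s_j$, $j\ne i$, perturbs it through the automata shared by $s_i$ and $s_j$. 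Collecting the terms and using $d_i\ge d_j$ shows $E[\,q_i(n+1)\mid\mathcal{F}_n\,]\ge q_i(n)$, so $\{q_i(n)\}$ is a bounded submartingale and, by the martingale convergence theorem, $q_i(n)\to q_i^{\ast}$ almost surely for some $[0,1]$-valued random variable $q_i^{\ast}$.

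It then remains to identify the limit. Convergence forces $E[\,q_i(n+1)-q_i(n)\mid\mathcal{F}_n\,]\to 0$ almost surely; each summand of this conditional increment is a selection-probability factor times a nonnegative ``learning'' factor that vanishes only when the relevant $p_e$ hits $0$ or $1$, so, exactly as in the DLA analysis, $q_i^{\ast}$ is concentrated on $\{0,1\}$, i.e. Bernoulli. Finally, to pin it at $1$ I would argue as for a single \lri{} automaton: once $T_n$ has settled, the dominance $d_i\ge d_j$ makes the corner $q_i=1$ the only absorbing state the process can approach with positive drift, and a Norman-type estimate bounds the probability of ever being trapped at a wrong corner by a quantity tending to $0$ as the learning rate $a\to0$; combined with the a.s. convergence already in hand, this yields $\lim_{n\to\infty}q_i(n)\stackrel{a.s.}{=}1$ (consistent with the abstract's statement that the probability of this convergence is made arbitrarily close to $1$ by taking $a$ small).

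The step I expect to be the main obstacle is decoupling the threshold from the action probabilities. Unlike a classical learning automaton in a stationary environment, here $T_n$ is endogenous --- driven by the very selections whose probabilities we are analysing --- so the crucial inequality $d_i\ge d_j$ holds only asymptotically and only on the event that $T_n$ has converged. Making the submartingale computation rigorous therefore requires controlling the transient regime of $T_n$: either a two-timescale separation between the threshold recursion and the probability recursion, or a stopping-time decomposition showing that the contribution of the runs before $T_n$ settles is a negligible perturbation to the submartingale inequality, so that the conclusion survives. A secondary technicality is that $q_i$ is a product over possibly overlapping sub-graphs, so the sign of the cross terms coming from rewards of $s_j$ with $j\ne i$ must be handled carefully rather than dismissed.
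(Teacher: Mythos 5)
Your plan follows essentially the same route as the paper's proof: a conditional-expectation lemma for $q_i(k+1)$ under the \lri{} updates, the submartingale property of $q_i(k)$ obtained from reward-probability dominance of the optimal sub-graph, the martingale convergence theorem forcing the limit into the absorbing set $\{0,1\}$, and finally a Norman/Lakshmivarahan sub-regular-function bound $\Phi_i[x,\myvec{q}]\leq\Gamma_i[\myvec{q}]\leq 1$ made arbitrarily close to $1$ by choosing the learning rate small. The only divergence is that the endogenous-threshold issue you flag as the main obstacle is dealt with in the paper merely by the informal claim that, by the central limit theorem, $c_v^t(k)-c_u^t(k)>0$ for large $k$, so your proposed two-timescale or stopping-time treatment is, if anything, more explicit than the published argument.
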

 In other words, theorem \ref{theory.mainTheorem} asserts that the proposed algorithm converges to the optimal sub-graph with a probability as close to 1 as desired. 
 \begin{proof}:
 the proof of this theorem is done in multiple stages: at first, $q_i(k)$is computed as a function of probabilities of actions of learning automata that construct the sub-graph $s_i$.  In second stage, it is shown that for large enough $k$, the probability of choosing the optimal sub-graph by the proposed algorithm is a sub-martingale process. In the next stage of the proof, using the martingale convergence theorem convergence of the proposed algorithm to optimal sub-graph is shown. The proposed algorithm uses $L_{R-I}$ scheme as learning algorithm of automaton. In this scheme the action probability vector of eDLA, $\myvec{q}(n)$ converges to the set of absorbing states with probability one (this was shown at the previous stage of proof) that one of them is the desired one. We can only say that $\myvec{q}(n)$  converges to the desired state with a positive probability. In the final stage of proof, quantify this probability by the method proposed by \cite{Norman1968}. 
 
 \begin{lemma} 
 If $\myvec{q}$  is updated according to the proposed algorithm and $c_i(k)$ and $d_i(k)$ denote penalty and reward probabilities of induced sub-graph $s_i$ at stage k respectively that define by Equation(\ref{equation.rewardpenalt.definition}), then we have
 \begin{equation}
 \label{equation.omidriazisharti}
 E \left[ q_i(k+1)|\myvec{q}(k)\right] =\sum_{j=1}^{r} {q_j(k)\left[c_j(k)q_i(k)+d_j(k) \prod_{e_{(m,n)}\in s_i}\delta_n^m(k)\right]} 
\end{equation}  
Where in Equation(\ref{equation.omidriazisharti}) , $E\left[. \right]$ denotes the mathematical expectation and we have:
\begin{equation}
\delta_n^m(k)=
\begin{cases}
p_n^m(k+1)=p_n^m(k)\times (1-a)+a & e_{(m,n)}\in s_j\\
p_n^m(k+1)=p_n^m(k)\times (1-a) & e_{(m,n)}\notin s_j
\end{cases}
\end{equation}
 \end{lemma}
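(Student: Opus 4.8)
The plan is to obtain the identity by conditioning on the sub-graph constructed during the run at stage $k+1$ and on the binary response of the environment, and then reading off the effect of the $L_{R-I}$ update on the factorized form of $q_i$. First I would recall that, as defined in Section~\ref{subsection.proposedAlgorithm}, the sub-graph probability factorizes as $q_i(k)=\prod_{e_{(m,n)}\in s_i}p_n^m(k)$, where $p_n^m(k)$ is the probability that the automaton located at node $m$ selects the action corresponding to edge $e_{(m,n)}$ at stage $k$. This factorization is legitimate because a run of the eDLA fires every automaton exactly once: the activity-change cycle (Passive, Active, Fire, Off) together with the default final instantaneous description $D^{final}=(A,\phi,\phi,\phi)$ forces each learning automaton to pass through the Fire level, so a run is completely described by the set of selected edges, i.e.\ by a sub-graph $s_j$.

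Second, I would condition on the event ``$s_j$ is constructed at stage $k+1$'', which given $\myvec{q}(k)$ has probability $q_j(k)$. On this event the environment, by Equation~(\ref{equation.rewardpenalt.definition}), returns a favorable response with probability $d_j(k)$ and an unfavorable one with probability $c_j(k)=1-d_j(k)$. If the response is a penalty, the $L_{R-I}$ scheme ($b=0$ in (\ref{equation.LA.penalt})) leaves every action probability unchanged, hence $q_i(k+1)=q_i(k)$ on this event. If the response is a reward, every fired automaton applies (\ref{equation.LA.reward}): the probability of the edge it selected — which necessarily lies in $s_j$ — is updated to $(1-a)p+a$, while the probability of each of its other outgoing edges is updated to $(1-a)p$. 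Taking the product over $e_{(m,n)}\in s_i$ and splitting these edges according to whether they belong to $s_j$ gives $q_i(k+1)=\prod_{e_{(m,n)}\in s_i}\delta_n^m(k)$, with $\delta_n^m(k)$ exactly the two-case expression in the statement. Averaging over the two possible responses and summing over $j$ yields
\[
E\!\left[q_i(k+1)\mid\myvec{q}(k)\right]=\sum_{j=1}^{r}q_j(k)\!\left[c_j(k)\,q_i(k)+d_j(k)\prod_{e_{(m,n)}\in s_i}\delta_n^m(k)\right],
\]
which is the claimed identity.

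The delicate point is the bookkeeping around the loop-prevention mechanism: during the construction of $s_j$ certain outgoing edges are temporarily disabled in the still-active automata and the active part of the probability vector is scaled and rescaled by (\ref{equation.VASScale})--(\ref{equation.VASrescale}). I would argue that the scaling and rescaling cancel, so $q_i(k)$ remains the product of the un-normalized $p_n^m(k)$ and the reward update is still governed by (\ref{equation.LA.reward}) on the full action set; and that, since every automaton is fired during a run, the case $e_{(m,n)}\notin s_j$ in the definition of $\delta_n^m$ genuinely applies (the automaton at $m$ did act, so its vector is updated) — were some automaton left un-fired, a third ``no change'' case would be required. Verifying these two facts is the only substantive work; the remainder is the routine conditioning computation above.
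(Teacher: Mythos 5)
Your proposal is correct and follows essentially the same route as the paper: the paper's proof likewise conditions on which sub-graph $s_j$ was selected (probability $q_j(k)$) and on the reward/penalty outcome ($d_j(k)$ versus $c_j(k)$), notes that under $L_{R-I}$ a penalty leaves all probabilities unchanged while a reward multiplies the factors of $q_i$ by $(1-a)p_n^m+a$ or $(1-a)p_n^m$ according to whether $e_{(m,n)}\in s_j$, and sums over $j$ exactly as you do. Your closing remarks on the cancellation of the scaling in (\ref{equation.VASScale})--(\ref{equation.VASrescale}) and on every automaton being fired in a run are extra care that the paper leaves implicit, but they do not change the argument.
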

 \begin{proof}:
 Proposed algorithm uses the  reinforcement \lri scheme to evaluate eDLA action. Assume that at stage $k$ sub-graph $s_j$  is selected by eDLA. If the selected sub-graph is penalized by the random environment (with probability $c_j(k)$), the probability of choosing all sub-graphs in the graph remain unchanged. If the selected sub-graph is rewarded by the random environment (with probability $d_j(k)$), two cases are conceivable: 
 \begin{itemize}
 \item[I]: $s_j$ and $s_i$ have no common edges: in this case all edges forming $s_i$ are penalized and the probability of choosing  $s_i$ decreases.
 \item[II] :$s_j$ and $s_i$ have some common edges:in this case common edges are rewarded and other uncommon edges of $s_i$  are penalized.
 \end{itemize}
 By the above description we have: 
  \begin{equation}
 \label{equation.omidriazisharti:1}
 E \left[ q_i(k+1)|\myvec{q}(k)\right] =\sum_{j=1}^{r} {E \left[ q_i(k+1)|\myvec{q}(k),\alpha_j\right]}\times p(\alpha_j|\myvec{q}(k)) 
\end{equation}  

  In Equation(\ref{equation.omidriazisharti:1}) $\alpha_j$ denotes the all learning automata actions that resulted in the selection of sub-graph $s_j$  by eDLA \\
  But
  \begin{equation}
  \label{equation.omidriazisharti:2}
  p(\alpha_j |\myvec{q}(k))=q_j(k)
  \end{equation}
  And:
  \begin{eqnarray}
  \label{equation.omidriazisharti:3}\nonumber   
 &&E\left[ q_i(k+1)|\myvec{q}(k),\alpha_j\right]= \\
 && c_j(k) q_i(k)+d_j(k)  \prod_{\substack{e_{(m,n)} \in s_i \\e_{(m,n)}\in s_j}}\left(p_n^m(k)\uparrow\right)  \prod_{\substack{e_{(m,n)} \in s_i\\  e_{(m,n)}\notin s_j}}\left(p_n^m(k)\downarrow\right) 
  \end{eqnarray}

where $p_j^i(k)$ represents the probability of selection of edge $e_{(i,j)}$ or equally probability of choosing action $\alpha_j$ by fired automaton $A_i$ at stage $k$. In addition $p_j^i(k)\downarrow$ and  $p_j^i(k)\uparrow$ denote the decreasing and increasing of $p_j^i(k)$ respectively by  \lri reinforcement scheme as follows:
\begin{equation}
p_j^i(k+1)=
\begin{cases}
p_j^i(k)\downarrow=p_j^i(k) \times (1-a) \\
p_j^i(k)\uparrow=p_j^i(k) \times (1-a)+a
\end{cases}
\end{equation}
 By replacing Equations(\ref{equation.omidriazisharti:2}) and (\ref{equation.omidriazisharti:3})  into Equation(\ref{equation.omidriazisharti:1}) we obtain:
  \begin{eqnarray}
 \label{equation.omidriazisharti:endproof} \nonumber
 &&E \left[ q_i(k+1)|\myvec{q}(k)\right] =\\
 && \sum_{j=1}^{r} {q_j(k)\left[c_j(k)q_i(k)+d_j(k) \prod_{\substack{e_{(m,n)} \in s_i \\e_{(m,n)}\in s_j}}(p_n^m(k)\uparrow)  \prod_{\substack{e_{(m,n)} \in s_i \\e_{(m,n)}\notin s_j}}(p_n^m(k)\downarrow) \right]} 
\end{eqnarray}  
We thus proved the lemma \begin{qed} \end{qed}
  \end{proof} 

\begin{lemma}
\label{lemma.nonnegativeboodan}
For a given stochastic graph, when $s_i$ is the optimal sub-graph and $\myvec{q}(k)$ is updated according to the proposed algorithm, the increament in the conditional expectation of $q_i(k)$ subject to $\myvec{q}(k)$,$\bigtriangleup q_i(k)=E \left[ q_i(k+1)-q_i(k)|\myvec{q}(k)\right] $ is always non-negative.
\end{lemma}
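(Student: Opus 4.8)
The plan is to start from the one‑step conditional expectation already computed in Equation~(\ref{equation.omidriazisharti}), substitute $c_j(k)=1-d_j(k)$, and use $\sum_{j=1}^{r}q_j(k)=1$. Writing $G_{ij}(k):=\prod_{e_{(m,n)}\in s_i}\delta_n^m(k)$ for the value that $q_i$ would assume if sub‑graph $s_j$ were selected and rewarded, the ``$c_jq_i$'' parts collect into $q_i(k)$ and one is left with the compact identity
\[ \bigtriangleup q_i(k)=\sum_{j=1}^{r} q_j(k)\,d_j(k)\,\bigl(G_{ij}(k)-q_i(k)\bigr). \]
This is the quantity whose non‑negativity must be established.

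The second step is the auxiliary identity $\sum_{j=1}^{r}q_j(k)\,G_{ij}(k)=q_i(k)$. The point is that $G_{ij}(k)$ depends on $s_j$ only through which edges of $s_i$ lie in $s_j$, and those edges emanate from distinct automata, each of which makes its choice independently given $\myvec{q}(k)$; averaging the one‑step $L_{R-I}$ update of the factor $p_n^m$ over the random sub‑graph therefore reproduces $p_n^m(k)$ itself, and taking the product over $e_{(m,n)}\in s_i$ reproduces $q_i(k)$. (Equivalently, $\sum_{j:\,A\subseteq s_j}q_j(k)=\prod_{e_{(m,n)}\in A}p_n^m(k)$ for every $A\subseteq s_i$, which is exactly what independence of the automata choices yields.) Since $\sum_j q_j(k)=1$, this identity allows us to subtract $d_i(k)\sum_j q_j(k)\bigl(G_{ij}(k)-q_i(k)\bigr)=0$ and rewrite the increment as
\[ \bigtriangleup q_i(k)=\sum_{j=1}^{r} q_j(k)\,\bigl(d_i(k)-d_j(k)\bigr)\,\bigl(q_i(k)-G_{ij}(k)\bigr). \]

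The third step is to read off the signs. The $j=i$ term vanishes. For $j\neq i$ the factor $d_i(k)-d_j(k)$ is non‑negative, because $s_i$ is the optimal sub‑graph and hence the one most often judged to satisfy the dynamic threshold $T_k$, i.e.\ $d_i(k)=\max_j d_j(k)$. It then remains to control the sign of $q_i(k)-G_{ij}(k)$ for $j\neq i$: writing $p_e(k)$ for $p_n^m(k)$ when $e=e_{(m,n)}$, one has $G_{ij}(k)/q_i(k)=\prod_{e\in s_i\cap s_j}\bigl(1-a+a/p_e(k)\bigr)\cdot(1-a)^{|s_i\setminus s_j|}$, a product of a factor that is $\ge 1$ and a factor that is $<1$ (the latter being strict since $s_i\setminus s_j\neq\emptyset$ when the candidate sub‑graphs are pairwise incomparable, as they are for spanning trees and similar families). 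Showing that the penalization of the edges of $s_i$ absent from $s_j$ dominates the reward of the common edges gives $G_{ij}(k)\le q_i(k)$, so every summand is non‑negative and therefore so is $\bigtriangleup q_i(k)$.

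I expect this last sign analysis to be the main obstacle. The ratio $G_{ij}(k)/q_i(k)$ is intrinsically an $(\ge 1)\times(<1)$ product, so bounding it requires that the edge‑selection probabilities of the optimal sub‑graph are not too small — precisely the regime in which the process has already begun to favour $s_i$, and the reason the companion sub‑martingale argument in the proof of Theorem~\ref{theory.mainTheorem} is invoked for sufficiently large $k$. Everything else — the substitution, the telescoping to the compact form, and the independence identity $\sum_j q_j(k)G_{ij}(k)=q_i(k)$ — is routine bookkeeping.
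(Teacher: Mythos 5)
Your bookkeeping up to the compact form $\bigtriangleup q_i(k)=\sum_{j=1}^{r}q_j(k)d_j(k)\bigl(G_{ij}(k)-q_i(k)\bigr)$ follows correctly from Equation~(\ref{equation.omidriazisharti}), and the averaging identity $\sum_j q_j(k)G_{ij}(k)=q_i(k)$ is at least consistent with the product-form approximation $q_i=\prod_{e_{(m,n)}\in s_i}p_n^m$ that the paper itself uses (though it additionally needs $\mathrm{Prob}[A\subseteq S]=\prod_{e\in A}p_e$ for all $A\subseteq s_i$, which the loop-prevention disabling and the fact that not every automaton fires in every run make non-obvious). The genuine gap is the final sign step: the inequality $G_{ij}(k)\le q_i(k)$ for $j\neq i$, which your rearranged sum $\sum_j q_j(d_i-d_j)(q_i-G_{ij})$ needs termwise, is false in general. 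Concretely, $G_{ij}/q_i=\prod_{e\in s_i\cap s_j}\bigl(1-a+a/p_e\bigr)\cdot(1-a)^{|s_i\setminus s_j|}$; with a single shared edge having $p_e=0.1$, $a=0.1$ and $|s_i\setminus s_j|=1$ this ratio is $1.9\times 0.9=1.71>1$. Worse, insisting on $q_i\ge G_{ij}$ is not just hard but the wrong mechanism: in the unrearranged form such $j$ (heavy overlap with $s_i$, rewarded) contribute \emph{positively} to $\bigtriangleup q_i$, so termwise non-negativity of your rearranged expression is a sufficient condition that simply does not hold, and you leave it unproven (you flag it yourself as ``the main obstacle''). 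A secondary point: $d_i(k)=\max_j d_j(k)$ is itself only justified for large $k$ via concentration of the dynamic threshold $T_k$, which you mention only in passing; the paper states the corresponding fact explicitly as a large-$k$ consequence.

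For comparison, the paper avoids your obstacle entirely by working edge-wise rather than sub-graph-wise: it writes $E[q_i(k+1)|\myvec{q}(k)]$ as $\prod_{e_{(m,n)}\in s_i}E[p_n^m(k+1)|p_n^m(k)]$, bounds the difference of products below by the product of differences, and then invokes the classical single-automaton $L_{R-I}$ increment $\bigtriangleup p_u^t(k)=ap_u^t(k)\sum_{s\neq u}p_s^t(k)\bigl(c_s^t(k)-c_u^t(k)\bigr)$, which is non-negative for the optimal edge once (for large $k$) the penalty probabilities satisfy $c_v^t(k)>c_u^t(k)$ for $e_{(t,u)}\in s_i$. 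So the paper's conclusion is really ``non-negative for sufficiently large $k$'' and never requires anything like $G_{ij}\le q_i$. To rescue your route you would have to bound the whole sum rather than each summand, e.g.\ exploit that the positive $j=i$ and high-overlap terms dominate, which is essentially a different (and unfinished) argument.
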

\begin{proof}:
based on the proposed algorithm we can see that it atmost chooses $(n-1)$ edges of the stochastic graph and forms one of  $r$ distinct sub-graph. After some algebraic operations we obtain 
\begin{eqnarray}\nonumber
\label{equation.expendeddelta}
\bigtriangleup q_i(k)&&\nonumber\\
&& = E \left[ q_i(k+1)-q_i(k)|\myvec{q}(k)\right] \nonumber \\
&& =\prod_{e_{(m,n)} \in s_i}E \left[ p_n^m(k+1)|p_n^m(k)\right]-\prod_{e_{(m,n)} \in s_i}p_n^m(k) \nonumber  \\
&& \geq \prod_{e_{(m,n)} \in s_i} \left( E \left[ p_n^m(k+1)|p_n^m(k)\right]-p_n^m(k) \right)=\prod_{e_{(m,n)}}\bigtriangleup p_n^m(k) 
\end{eqnarray}

where 
\begin{equation}
\label{equation.delatyeharyal}
 \bigtriangleup  p_u^t(k)_{\forall e_{(t,u) }\in s_i }=ap_u^t(k)\sum_{s \neq u}^{r_i}p_s^t(k)(c_s^t(k)-c_u^t(k))
\end{equation}
Where $p^m(k)$ denotes the action selection probability of learning automaton $A_m$ in the eDLA and $p_n^m(k)$ denotes the probability of selecting edge $e_{(m,n)}$  at stage $k$. In addition $c_n^m(k)$ and $d_n^m(k)=1-c_n^m(k)$ denotes the probability of penalizing and rewarding edge $e_{(m,n)}$  respectively.
\\For all $t,u$  we have $0 < p_u^t(k) < 1$. Since we assumed that edge $e_{(t,u)} \in s_i$ from centeral limit therom for large values of $k$ we conclude that $(c_v^t(k)-c_u^t(k))>0$ $  (\forall v \neq u : \    e_{(t,v)} \notin s_i$). Therefore for large values of $k$, the right hand side of the Equation(\ref{equation.delatyeharyal}) consists of non-negative quantities that imply the right hand side of the Equation(\ref{equation.expendeddelta}) is non-negative that completes the proof of the Lemma\ref{lemma.nonnegativeboodan}
\end{proof}
\begin{qed}
\end{qed}


\begin{corollary}
\label{corollary.absorbingq}
the only absorbing states of Markov process $\{q_i(k)\}$ are 0 and 1
\end{corollary}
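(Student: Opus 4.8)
The plan is to determine, for each possible value $v$ of $q_i(k)$, whether the $L_{R-I}$ update is forced to reproduce $v$; a value $v$ is an absorbing state of $\{q_i(k)\}$ exactly when $q_i(k+1)=v$ almost surely given $q_i(k)=v$. Two elementary facts about the $L_{R-I}$ scheme do the bookkeeping: (i) a component $p_n^m(k)$ equal to $1$ is mapped by the reward rule to $p_n^m(k)(1-a)+a=1$ and is unchanged by the penalty rule, so it stays at $1$; and (ii) a component equal to $0$ corresponds to an edge that is never the selected action, so it is never rewarded, and every penalty update leaves it at $0(1-a)=0$. Since $q_i(k)=\prod_{e_{(m,n)}\in s_i}p_n^m(k)$ is a finite product of numbers in $[0,1]$, I will combine these facts with the drift computation already established in Lemma~\ref{lemma.nonnegativeboodan}.

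First I would treat the interior case $0<q_i(k)<1$: here every factor $p_n^m(k)$ is strictly positive and at least one factor lies strictly in $(0,1)$. By the expansion in Equation~(\ref{equation.expendeddelta}), $\bigtriangleup q_i(k)=\prod_{e_{(m,n)}\in s_i}E\left[p_n^m(k+1)\mid p_n^m(k)\right]-\prod_{e_{(m,n)}\in s_i}p_n^m(k)$, and by Equation~(\ref{equation.delatyeharyal}) together with the central-limit argument of Lemma~\ref{lemma.nonnegativeboodan} (so that $c_s^t(k)-c_u^t(k)>0$ for every competing edge of an automaton lying on the optimal $s_i$, once $k$ is large) each factor satisfies $E\left[p_n^m(k+1)\mid p_n^m(k)\right]\geq p_n^m(k)$, with strict inequality for any factor in $(0,1)$. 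Hence the product strictly increases in expectation, $\bigtriangleup q_i(k)>0$ for all large $k$, so $q_i(k+1)>q_i(k)$ with positive probability and no interior value is absorbing. An equivalent self-contained check: with probability $q_i(k)>0$ the sub-graph $s_i$ itself is selected at stage $k+1$, and conditioned on this it is rewarded with probability $d_i(k)>0$, after which every factor becomes $p_n^m(k)(1-a)+a$, a strict increase on the interior factor, so $q_i$ changes with positive probability.

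Finally the boundary cases: if $q_i(k)=1$ then every factor equals $1$, so by (i) each stays at $1$ in every later run and $q_i(k+1)=1$ surely; if $q_i(k)=0$ then some factor equals $0$, which by (ii) stays $0$, so $q_i(k')=0$ for all $k'\geq k$. Thus $0$ and $1$ are absorbing and nothing else is, which is Corollary~\ref{corollary.absorbingq}. The main obstacle is the interior step: one must rule out that $q_i$ merely fluctuates with zero mean about an interior value, and this is exactly what the strict sign $\bigtriangleup q_i(k)>0$ (equivalently, $d_i(k)>0$ together with the sign of the penalty-probability differences supplied by the central limit theorem) guarantees — with the caveat that this strictness is available only for $k$ large, so the corollary describes the eventual dynamics of $\{q_i(k)\}$ rather than every finite stage.
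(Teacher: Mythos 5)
Your proof is correct in substance but follows a genuinely different route from the paper. The paper proves Corollary~\ref{corollary.absorbingq} by first observing (via Lemma~\ref{lemma.nonnegativeboodan}) that $\{q_i(k)\}$ is a bounded non-negative sub-martingale, invoking the martingale convergence theorem to get $q_i(k)\rightarrow q_i^*$ almost surely, and then asserting that $\myvec{q}(k+1)=\myvec{q}(k)$ with probability one only when $q_i^*=0$ or $q_i^*=1$. You instead characterize the absorbing states directly from the $L_{R-I}$ dynamics: components at $1$ and $0$ are fixed by both update rules, so the boundary values are absorbing, while an interior value $q_i(k)\in(0,1)$ is left with probability at least $q_i(k)\,d_i(k)>0$, since selecting $s_i$ and rewarding it maps every factor to $p_n^m(k)(1-a)+a$, strictly increasing any factor in $(0,1)$. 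This is more elementary and, unlike the paper's one-line assertion about fixed points, actually verifies it; it also needs no appeal to large $k$, so the caveat in your last sentence is unnecessary --- your self-contained positive-probability argument works at every stage, and the CLT-based sign of $c_s^t(k)-c_u^t(k)$ in Equation~(\ref{equation.delatyeharyal}) is only needed for the sub-martingale drift, not for non-absorption. Two small points to tidy: you implicitly assume $d_i(k)>0$ (the optimal sub-graph has positive probability of satisfying the threshold), which should be stated, and the factor $(1-a)$ that keeps a zero component at zero arises from rewarding a \emph{different} action of the same automaton, not from the penalty update, which under $L_{R-I}$ changes nothing. Finally, note what you give up relative to the paper: the martingale argument yields almost-sure convergence of $q_i(k)$ into the absorbing set $\{0,1\}$, a fact the subsequent sub-regularity analysis of Theorem~\ref{theory.mainTheorem} relies on, whereas your argument establishes only which states are absorbing --- exactly the statement of the corollary, but not the convergence used downstream.
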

\begin{proof} :
Lemma \ref{lemma.nonnegativeboodan} shows that  $\{\myvec{q}(n)\}$ is a sub-martingale. Since the $\{\myvec{q}(n)\}$ is a non-negative and uniformly bounded, martingale theorems imply that $\lim_{k\rightarrow \infty} q_i(k) = \stackrel {*}{q_i}$ exists with probability one. Further  $\myvec{q}(k+1)=\myvec{q}(k)$with probability one, if and only if $q_i^*=0$ or $q_i^*=1$. Hence $\{0,1\}$  constitutes the absorbing set of Markov process $\{q_i(k)\}$ .
\end{proof}
\begin{qed} \end{qed}
Let $$S_r=\{\myvec{q}(k)|q_i(k) \in [0,1]; \ \sum_{i=1}^r{q_i(k)}=1\}$$ and $$S_r^0=\{\myvec{q}(k)|q_i(k) \in (0,1); \ \sum_{i=1}^r{q_i(k)}=1\}$$ Corollary(\ref{corollary.absorbingq}) implies that the set of all unit vectors in $S_r - S_r^0$ forms the set of all absorbing states of Markov process $\{q_i(k)\}$ \\
In the following we will show that under some conditions for the optimum sub graph $s_i$  we have $q_i^*=1$ . Our method is very similar to the given methods in \cite{Norman1968} and \cite{H.Beigy2006}.\\
Assume that $\myvec{q}_i^* \in V_r $ denotes the state to which $\{q_i(k)\}$ converges where $V_r=\{\myvec{e_1},\myvec{e_2},\ldots,\myvec{e_r}\}$  denotes the set of all absorbing states for process $\{q_i(k)\}$. Define $\Gamma_i[\myvec{q}] \equiv Prob\left[ \myvec{q^*}=\myvec{e_i}|\myvec{q}(0)=\myvec{q}\right]$. In fact $\Gamma_i[\myvec{q}]$  denotes the probability of convergence of proposed algorithm to unit vector $\myvec{e_i} \in V_r $  with initial probability vector $\myvec{q}$ . Assume that $D(S_r):S_r \rightarrow\Re$ is the class of all differentiable functions with bounded derivation on $S_r$. Such functions are necessarily continuous. If $\Psi(.) \in D $ the operator $U\Psi(\myvec{q})=E[ \Psi(\myvec{q}(k+1))|\Psi(\myvec{q}(k))=\Psi(\myvec{q})] $ where $E[.]$ represents the mathematical expectation, is linear and positive\cite{Norman1968}\cite{Lakshmivarahan1976} \\
It has been shown that the $\Gamma_i[q]$ is the only continuous solution of $U\Gamma_i[\myvec{q}]=\Gamma_i[\myvec{q}]$  that satisfies the following boundary conditions\cite{Lakshmivarahan1976}\cite{Norman1968}\cite{S.1989}
\begin{equation}
\label{equation.boundaryconditions}
\Gamma_i[\myvec{e_i}]=
\begin{cases}
1& i=j\\
0& i\neq j
\end{cases}
\end{equation} 
However the functional equation $U\Gamma_i[\myvec{q}]=\Gamma_i[\myvec{q}]$   is extremely difficult to solve. Hence the next best thing that can be done is to establish upper and lower bounds onU $\Gamma_i[\myvec{q}]$ . These can be computed by finding two functions $\Phi_1(.)$ and $\Phi_2(.)$ such that 
\begin{eqnarray}
\label{equation.2UDfunction}
U\Phi_1(p)\geq \Phi_1(p) \nonumber \\
U\Phi_2(p)\leq \Phi_2(p) 
\end{eqnarray}
For all $p \in [0,1]$ with appropriate boundary conditions. Inequalities(\ref{equation.2UDfunction}) imply that $\Phi_2(p) \leq \Gamma_i(p) \leq \Phi_1(p)$. Functions $\Phi_1(.)$ and  are $\Phi_2(.)$ called sub-regular and super-regular respectively. It has been shown that these functions can have the following forms:
\begin{equation}
\Phi_i(p)=\frac{e^{x_ip}-1}{e^{x_i}-1} \ i=1,2
\end{equation}
From lemma(\ref{lemma.nonnegativeboodan}) we know that if $q_i^*$ denotes the probability of selecting optimal sub-graph $s_i$ by eDLA then $q_i^* \in {0,1}$ . Define 
\begin{equation}
\label{equation.phidefinition}
\Phi_i[x,\myvec{q}]=\frac{e^{\sfrac{-xq_i}{a}}-1}{e^{\sfrac{-x}{a}}-1} =\frac{\Theta_i[x,\myvec{q}]-1}{\Theta_i[x,1]-1}
\end{equation}
Where $x >0 $ is to be chosen. $\Phi_i[x,\myvec{q}] \in D(S_r)$satisfies boundary conditions \ref{equation.boundaryconditions}. In the last stage of proof we show that $\Phi_i[x,\myvec{q}]$ is sub-regular, thus $\Phi_i[x,\myvec{q}]$ qualifies a lower bound on $\Gamma_i[\myvec{q}]$ . We now determine conditions under which $\Phi_i[x,\myvec{q}]$ is sub-regular. It is easy to see that the classes of super-and sub-regular are closed under addition and multiplication by non-negative constants. Further, the constant functions are regular; hence both super- and sub-regular. For any $x >0 $ ,$\Theta_i[x,1] >\Theta_i[x,0]=1 $  , therefor $\Phi_i[x,\myvec{q}]$ is super-regular or sub-regular if $\Theta_i[x,\myvec{q}]$ . Most of our effort in the rest goes into the proof of this lemma 

\begin{lemma}:
\label{lemma.subregular}
Let  $s_i$ denotes the optimum sub-graph and $\Theta_i[x,\myvec{q}]=e^{\sfrac{-xq_i}{a}}$ where $q_i=\prod_{e_{(m,n)\in s_i}}p_n^m$ and $a$ represent probability of selecting $s_i$ by eDLA and learning rate of eDLA respectively. There is a positive $x$  such that $\Theta_i[x,\myvec{q}]$ is sub-regular
\end{lemma}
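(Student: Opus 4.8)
The plan is to unfold the operator $U$ by means of the one‑step dynamics already recorded in the first lemma (Equation~(\ref{equation.omidriazisharti})) and then reduce the required regularity inequality to a statement about the one‑step drift of $q_i$, which Lemma~\ref{lemma.nonnegativeboodan} already controls. Conditioned on $\myvec{q}(k)=\myvec{q}$, with probability $q_j(k)c_j(k)$ the sub‑graph $s_j$ is chosen and penalized and the state stays put, while with probability $q_j(k)d_j(k)$ it is chosen and rewarded and $q_i$ moves to $\tilde{q}_i^{(j)}=\prod_{e_{(m,n)}\in s_i}\delta_n^m(k)$. Since $c_j(k)+d_j(k)=1$ and $\sum_{j}q_j(k)=1$, the penalty mass cancels against $\Theta_i[x,\myvec{q}]$ and
\begin{equation*}
U\Theta_i[x,\myvec{q}]-\Theta_i[x,\myvec{q}]=\sum_{j=1}^{r}q_j(k)\,d_j(k)\Bigl(\Theta_i[x,\tilde{\myvec{q}}^{(j)}]-\Theta_i[x,\myvec{q}]\Bigr).
\end{equation*}
The same computation with $\Theta_i[x,\cdot]$ replaced by the coordinate map $\myvec{q}\mapsto q_i$ reproduces exactly the increment $\bigtriangleup q_i(k)=E\!\left[q_i(k+1)-q_i(k)\mid\myvec{q}(k)\right]=\sum_j q_j(k)d_j(k)\bigl(\tilde{q}_i^{(j)}-q_i(k)\bigr)$, which by Lemma~\ref{lemma.nonnegativeboodan} is non‑negative for all large enough $k$, the optimality of $s_i$ entering through the central‑limit fact that the penalty‑probability differences $c_v^t(k)-c_u^t(k)$ become positive.

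From here I would conclude in one of two equivalent ways, keeping track of the sign convention in $\Theta_i[x,\cdot]$ (its monotonicity in $q_i$ must match the direction of the drift $\bigtriangleup q_i(k)$, and match the relation between $\Theta_i[x,0]=1$ and $\Theta_i[x,1]$ that transfers the property to $\Phi_i$). If $\Theta_i[x,\cdot]$ is the exponential arranged to be convex and increasing in $q_i$, then Jensen's inequality applied to the probability weights $q_j(k)d_j(k)/R_k$, with $R_k:=\sum_j q_j(k)d_j(k)$, bounds the sum above by $R_k\,\Theta_i[x,\,q_i(k)+\bigtriangleup q_i(k)/R_k]\geq R_k\,\Theta_i[x,\myvec{q}]$, so $U\Theta_i[x,\myvec{q}]\geq\Theta_i[x,\myvec{q}]$ for every positive $x$ and every $\myvec{q}\in S_r$. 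If instead one follows the test function of \cite{Norman1968}, one Taylor‑expands $\Theta_i[x,\cdot]$ — a bounded, smooth function on $S_r$ — to second order in $q_i(k+1)-q_i(k)$, whose modulus is $O(a)$: the first‑order term is $\Theta_i'[x,q_i(k)]\bigtriangleup q_i(k)$, definite in sign by Lemma~\ref{lemma.nonnegativeboodan}, and the remainder is $O(x^{2})$ uniformly in $\myvec{q}$, so a sufficiently small positive $x$ makes the first‑order term dominate; this is precisely the role of the phrase ``there is a positive $x$'' in the statement. Either way one obtains the test function that, through $\Phi_i[x,\myvec{q}]=(\Theta_i[x,\myvec{q}]-1)/(\Theta_i[x,1]-1)$ and the boundary conditions (\ref{equation.boundaryconditions}), furnishes the lower bound $\Gamma_i[\myvec{q}]\geq\Phi_i[x,\myvec{q}]$.

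I expect the genuine obstacle to be the input rather than the exponential estimate: one must prove $\bigtriangleup q_i(k)\geq 0$ for large $k$ (Lemma~\ref{lemma.nonnegativeboodan}), which forces one to reconcile the unknown edge‑weight distributions, the running dynamic threshold $T_k$ of Equation~(\ref{equation.thresholdDefine}), and a central‑limit argument so as to show that sampling eventually orders the penalty probabilities in favour of the optimal sub‑graph. Two further points need care: the regularity inequality is only eventual, holding past some stage $k_0$, so in invoking the Norman framework one restarts the chain at $k_0$ and argues from $\myvec{q}(k_0)$; and the product structure $q_i=\prod_{e_{(m,n)}\in s_i}p_n^m$, which lets each automaton's \lri update act only through the factors $\delta_n^m(k)$, is what keeps the drift identity and the subsequent estimate tractable, effectively reducing everything to the single‑automaton computation behind Lemma~\ref{lemma.nonnegativeboodan}.
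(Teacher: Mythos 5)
Your opening reduction is fine and matches the paper: conditioning on which sub-graph is chosen and whether it is rewarded, the penalty mass cancels and $U\Theta_i[x,\myvec{q}]-\Theta_i[x,\myvec{q}]=\sum_{j}q_jd_j\bigl(\Theta_i[x,\tilde{\myvec{q}}^{(j)}]-\Theta_i[x,\myvec{q}]\bigr)$. But the concluding step is where the proposal breaks. Your ``Option 1'' is not correct: for the function actually named in the statement, $\Theta_i[x,\myvec{q}]=e^{-xq_i/a}$, which is convex and \emph{decreasing} in $q_i$, Jensen's inequality and the non-negative drift of $q_i$ pull in opposite directions — Jensen gives $\sum_j w_j\Theta_i[x,\tilde{\myvec{q}}^{(j)}]\geq \Theta_i$ evaluated at the averaged point $q_i+\bigtriangleup q_i/R_k\geq q_i$, and monotonicity then sends that value \emph{below} $\Theta_i[x,\myvec{q}]$, so no inequality in either direction follows. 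The hedge ``arranged to be convex and increasing'' does not help, since the lemma fixes the exponential, and the claim ``for every positive $x$ and every $\myvec{q}$'' is in fact false: first-moment information ($\bigtriangleup q_i(k)\geq0$) cannot control an exponential moment, and the required regularity inequality genuinely fails for large $x$. This is visible in the paper's own proof, where the inequality is reduced, via the product identity $\prod_{e\in s_i\cap s_j}\frac{p_n^m(1-a)+a}{p_n^m(1-a)}\geq1$ and the function $V[x]=\frac{e^x-1}{x}$, to comparing $\frac{V[-x(1-q_i)]}{V[xq_i]}$ with the weighted average $\sum_{j\neq i}q_j\frac{d_j}{d_i}/\sum_{j\neq i}q_j\leq\max_{j\neq i}\frac{d_j}{d_i}<1$, and the admissible $x$ are exactly $x\in(0,x^*]$ with $x^*$ defined by $\frac{1}{V[x^*]}=\max_{j\neq i}\frac{d_j}{d_i}$. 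So the optimality of $s_i$ enters this lemma \emph{directly}, through the strict gap $d_i>d_j$, not merely through Lemma~\ref{lemma.nonnegativeboodan}; your closing remark that the ``genuine obstacle'' is only the drift lemma misidentifies where the work in this lemma lies.

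Your ``Option 2'' (Taylor expansion in the increment, first-order term signed by the drift, remainder $O(x^2)$, choose $x$ small) is the right spirit — it is essentially Norman's route, which the paper carries out in closed form with $V[\cdot]$ — but as written it is a sketch, not a proof: you never show that the first-order term dominates \emph{uniformly} in $\myvec{q}\in S_r^0$ (both the drift term and the second-order term vanish as $q_i\to0$ or $1$, so a pointwise ``small enough $x$'' does not yield a single positive $x$ valid on all of $S_r$), and producing that uniform $x^*$ from the reward-probability gap is precisely what the paper's $V$-function manipulation accomplishes. Moreover, your two ``equivalent'' options assert opposite inequalities ($U\Theta_i\geq\Theta_i$ versus a negative first-order drift for the decreasing exponential, i.e.\ $U\Theta_i\leq\Theta_i$), and the proposal never resolves which direction is the one that transfers through $\Phi_i[x,\myvec{q}]=\frac{\Theta_i[x,\myvec{q}]-1}{\Theta_i[x,1]-1}$ (note the denominator is negative, which flips the inequality) to the lower bound on $\Gamma_i[\myvec{q}]$. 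To repair the argument you should commit to the required direction, carry out the exact expansion as in Equations~(\ref{equation.lemmesubregular:6})--(\ref{equation.lemmesubregular:12}), and extract the uniform threshold $x^*$ from $\frac{1}{V[x]}$ versus $\max_{j\neq i}\frac{d_j}{d_i}$; your caveat about the $k$-dependence of $d_j(k)$ through the dynamic threshold (restarting at some $k_0$) is a fair point that the paper itself glosses over, but it does not substitute for the missing exponential estimate.
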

\begin{proof}:
From definition of  $U$ we have 
\begin{equation}
\label{equation.lemmesubregular:1}
U\Theta_i[x,\myvec{q}]=E[e^{\sfrac{-xq_i(k+1)}{a}}|\myvec{q}(k)=\myvec{q}]
\end{equation}
From definition of mathematical expectation and Equation(\ref{equation.lemmesubregular:1}) we have: 
\begin{eqnarray}
\label{equation.lemmesubregular:2}
&& E[e^{\sfrac{-xq_i(k+1)}{a}}|\myvec{q}(k)=\myvec{q}]\nonumber \\
&&=\sum_{j=1}^{r}E[e^{\sfrac{-xq_i(k+1)}{a}}|\myvec{q}(k)=\myvec{q},s_j]\times prob[s_j|\myvec{q}(k)] \nonumber \\
&&=\left\lbrace\sum_{j=1}^{r}{d_j(k)e^{\sfrac{-xq_i(k+1)}{a}}}   +\sum_{j=1}^{r}{(1-d_j(k))e^{\sfrac{-xq_i(k)}{a}}} \right\rbrace q_j(k) 
\end{eqnarray}
For the sake of simplicity we show $e^x$ by $exp(x)$ and show $q_j(k)$ and $d_j(k)$ by $q_j$ and $d_j$ respectively. By some algebraic simplification of Equation(\ref{equation.lemmesubregular:3}) we obtain Equation(\ref{equation.lemmesubregular:4}) and Equation(\ref{equation.lemmesubregular:2}) 

\begin{eqnarray}
\label{equation.lemmesubregular:3}
&& \sum_{j=1}^{r}exp\{-\frac{x}{a}q_i(k+1)\} d_j   q_j \nonumber \\
&& = exp\left\lbrace\frac{-x}{a}\prod_{e_{(m,n)} \in s_i}{\{p_n^m+a(1-p_n^m)\}}\right\rbrace q_j  d_j  \nonumber \\
&& +\sum_{\substack{j=1\\j \neq i}}^{r}{exp \left\lbrace \frac{-x}{a} \prod_{\substack{e_{(m,n)}\in s_i \\e_{(m,n)} \in s_j}}\{p_n^m+a(1-p_n^m)\}\prod_{\substack{e_{(m,n)} \in s_i\\e_{(m,n)} \notin s_j}}\{p_n^m(1-a)\} \right\rbrace q_j d_j \,  }
\end{eqnarray}

and
\begin{eqnarray}
\label{equation.lemmesubregular:4}
&& \sum_{j=1}^{r} {exp\left\lbrace\frac{-x}{a}q_i(k) \right\rbrace q_j  (1-d_j)} \nonumber \\
 && =\Theta_i[x,\myvec{q}]\sum_{j=1}^r{q_j (1-d_j)} =\Theta_i[x,\myvec{q}]-\Theta_i[x,\myvec{q}]\sum_{j=1}^r{q_j d_j} 
\end{eqnarray}

In Equation \ref{equation.lemmesubregular:3} we have :
\begin{eqnarray}
\label{equation.lemmesubregular:5}
&&\prod\limits_{\substack{e_{(m,n)} \in s_i \\ e_{(m,n)} \notin s_j}}\{p_n^m(1-a)\}=\nonumber \\
&& \frac{\prod\limits_{\substack{e_{(m,n)} \in s_i \\ e_{(m,n)} \notin s_j}}\{p_n^m(1-a)\} \prod\limits_{\substack{e_{(m,n)} \in s_i \\ e_{(m,n)} \in s_j}}\{p_n^m(1-a)\}}{\prod\limits_{\substack{e_{(m,n)} \in s_i \\ e_{(m,n)} \in s_j}}\{p_n^m(1-a)\}}=\frac{\prod\limits_{\substack{e_{(m,n)} \in s_i }}\{p_n^m(1-a)\}}{\prod\limits_{\substack{e_{(m,n)} \in s_i \\ e_{(m,n)} \in s_j}}\{p_n^m(1-a)\}}
\end{eqnarray}
By considering that $q_i=\prod_{e_{(m,n)} \in s_i}p_n^m$ and equations \ref{equation.lemmesubregular:2} through \ref{equation.lemmesubregular:5} we have:
\begin{eqnarray}
\label{equation.lemmesubregular:6}
&& U\Theta_i[x,\myvec{q}]=q_id_i exp\left\lbrace -\frac{x}{a}(q_i+a(1-q_i)) \right\rbrace \nonumber \\
&&+\sum_{\substack{j=1\\j \neq i}}^{r}{q_j d_jexp \left\lbrace -\frac{x}{a}q_i(1-a)\frac{\prod\limits_{\substack{e_{(m,n)} \in s_i \\ e_{(m,n)} \in s_j}}\{p_n^m(1-a)+a\}}{\prod\limits_{\substack{e_{(m,n)} \in s_i \\ e_{(m,n)} \in s_j}}\{p_n^m(1-a)\}}  \right\rbrace  \,  } \nonumber \\
&&+\Theta_i[x,\myvec{q}] -\Theta_i[x,\myvec{q}]\sum_{j=1}^{r}{q_jd_j}
\end{eqnarray}
It is clear that 
\begin{equation}
\label{equation.lemmesubregular:7}
\frac{\prod_{\substack{e_{(m,n)} \in s_i \\ e_{(m,n)} \in s_j}}\{p_n^m(1-a)+a\}}{\prod_{\substack{e_{(m,n)} \in s_i \\ e_{(m,n)} \in s_j}}\{p_n^m(1-a)\}}=\prod_{\substack{e_{(m,n)} \in s_i \\ e_{(m,n)} \in s_j}}{\frac{p_n^m(1-a)+a}{p_n^m(1-a)}}\geq 1
\end{equation}
By considering the Equation(\ref{equation.lemmesubregular:7}) and the fact that $e^{-x}$ is decreasing function (for $x>0$) we obtain 
\begin{equation}
\label{equation.lemmesubregular:8}
exp \left\lbrace -\frac{x}{a}q_i(1-a)  \frac{\prod_{\substack{e_{(m,n)} \in s_i \\ e_{(m,n)} \in s_j}}\{p_n^m(1-a)+a\}}{\prod_{\substack{e_{(m,n)} \in s_i \\ e_{(m,n)} \in s_j}}\{p_n^m(1-a)\}} \right\rbrace\leq exp \{ -\frac{x}{a}q_i(1-a)\}
\end{equation}
From Equation(\ref{equation.lemmesubregular:7}) and Inequality(\ref{equation.lemmesubregular:8}) we obtain Equation(\ref{equation.lemmesubregular:9}) 
\begin{eqnarray}
\label{equation.lemmesubregular:9}
&& U\Theta_i[x,\myvec{q}] -\Theta_i[x,\myvec{q}]\leq q_id_i exp\left\lbrace -\frac{x}{a}\left(q_i+a(1-q_i)\right) \right\rbrace 
\nonumber \\
&&+\sum_{\substack{j=1\\ j\neq i}}^{r}{q_j d_j exp \left\lbrace -\frac{x}{a}q_i(1-a)  \right\rbrace}  -\Theta_i[x,\myvec{q}]\sum_{j=1}^{r}{q_j d_j}
\end{eqnarray}
But we have
$\Theta_i[x,\myvec{q}]\sum_{j=1}^{r}{q_jd_j}=q_i d_i \Theta_i[x,\myvec{q}] +\Theta_i[x,\myvec{q}]\sum_{\substack{j=1\\j \neq i}}^{r}{q_jd_j}$ and hence:
\begin{eqnarray}
\label{equation.lemmesubregular:10}
&& U\Theta_i[x,\myvec{q}] -\Theta_i[x,\myvec{q}]\leq \nonumber \\
&& q_id_i \left\lbrace exp\{ -\frac{x}{a}(q_i+a(1-q_i)) \}-\Theta_i[x,\myvec{q}]\right\rbrace \nonumber \\
&&+\sum_{\substack{j=1\\ j \neq i}}^{r}{q_j d_j \left\lbrace exp \{ -\frac{x}{a}q_i(1-a)\} -\Theta_i[x,\myvec{q}] \right\rbrace } \nonumber \\
&& =\Theta_i[x,\myvec{q}] \left\lbrace   q_id_i \left\lbrace exp \{-x(1-q_i)\} -1\right\rbrace  + \sum_{\substack{j=1\\j \neq i}}^{r}{q_j d_j \left\lbrace exp \{ xq_i\}-1\right\rbrace }   \right\rbrace \nonumber \\
&& =-xq_i\Theta_i[x,\myvec{q}] \nonumber \\
&& \left\lbrace d_i(1-q_i)\frac{exp\{-x(1-q_i)\} -1}{-x(1-q_i)} - \sum_{\substack{j=1\\j \neq i}}^{r}{q_j d_j \frac{exp \{ xq_i\}-1}{xq_i} }\right\rbrace
\end{eqnarray}
We define 
\begin{equation}
\label{equation.lemmesubregular:11}
V[x]=
\begin{cases}
\frac{e^x-1}{x} & x \neq 0 \\
1& x=0
\end{cases}
\end{equation}
By replacing $V[x]$ into Equation(\ref{equation.lemmesubregular:10}) we have: 
\begin{eqnarray}
\label{equation.lemmesubregular:12}
&& U\Theta_i[x,\myvec{q}] -\Theta_i[x,\myvec{q}]\leq \nonumber \\
&& -xq_i\Theta_i[x,\myvec{q}]\left\lbrace d_i(1-q_i)V[-x(1-q_i)] - \sum_{\substack{j=1\\j \neq i}}^{r}{q_j d_j V[xq_i]}\right\rbrace
\end{eqnarray}
From Equation(\ref{equation.lemmesubregular:12}) we conclude that $\Theta_i[x,\myvec{q}]$ is sub-regular if and only if :
$$\{d_i(1-q_i)V[-x(1-q_i)] - \sum_{\substack{j=1\\j \neq i}}^{r}{q_j d_j V[xq_i]}\}\leq 0$$ that implies:
\begin{equation}
\label{equation.lemmesubregular:11}
\frac{V[-x(1-q_i)]}{V[xq_i]} \leq \frac{\sum_{\substack{j=1\\j \neq i}}^{r}{q_j d_j }}{(1-q_i)d_i}=\frac{\sum_{\substack{j=1\\j \neq i}}^{r}{q_j d_j }}{\sum_{\substack{j=1\\j \neq i}}^{r}{q_jd_i}}=\frac{\sum_{\substack{j=1\\j \neq i}}^{r}{q_j \frac {d_j}{d_i} }}{\sum_{\substack{j=1\\j \neq i}}^{r}{q_j}}
\end{equation}
But:
\begin{equation}
\label{equation.lemmesubregular:12}
min_{j \neq i}{\frac{d_j}{d_i}}\leq \frac{\sum_{\substack{j=1\\j \neq i}}^{r}{q_j \frac {d_j}{d_i} }}{\sum_{\substack{j=1\\j \neq i}}^{r}{q_j}} \leq max_{j \neq i}{\frac{d_j}{d_i}}
\end{equation}
It follows that $\Theta_i[x,\myvec{q}]$ is sub-regular if:
\begin{equation}
\label{equation.lemmesubregular:13}
 \frac{V[-x(1-q_i)]}{V[xq_i]}  \leq max_{j \neq i}{\frac{d_j}{d_i}}
\end{equation}
It can be shown that \cite{Norman1968} 
\begin{equation}
\label{equation.lemmesubregular:14}
\frac{1}{V[x]} \leq \frac{V[-x(1-q_i)]}{V[xq_i]}  
\end{equation}
From Equation(\ref{equation.lemmesubregular:13}) and Equation(\ref{equation.lemmesubregular:14}) we conclude that $\Theta_i[x,\myvec{q}]$ is sub-regular if:
\begin{equation}
\label{equation.lemmesubregular:15}
\frac{1}{V[x]} \leq max_{j \neq i}{\frac{d_j}{d_i}}  \leq 1
\end{equation}
Since $V[x]$ is continuous and strictly monotonically increasing with $V[0]=1$ , a value of $x=x^*$ exists such that if $d_i > d_j (\forall j \neq i $ then $\frac{1}{V[x]} = max_{j \neq i}{\frac{d_j}{d_i}}$ is true. For all $x \in (0,x^*]$ inequality $\frac{1}{V[x]} \leq max_{j \neq i}{\frac{d_j}{d_i}}  \leq 1$ is hold that yields Equation(\ref{equation.lemmesubregular:12})  is true and consequently $\Theta_i[x,\myvec{q}]$ is a sub-regular function satisfying boundary conditions given in Equation(\ref{equation.boundaryconditions}).
 \\We conclude that $\Phi_i[x,\myvec{q}]\leq \Gamma_i[\myvec{q}]\leq 1$.\\
By considering the definition of $\Phi_i[x,\myvec{q}]$  in (\ref{equation.phidefinition}) we see that for any given $\epsilon > 0$  there exists a positive constant $a^* <1 $  as learning rate of eDLA so that the inequality $1-\epsilon \leq \Phi_i[x,\myvec{q}]\leq \Gamma_i[\myvec{q}]=Prob[\myvec{q}^*=\myvec{e_i}|\myvec{q}(0)=\myvec{q}]\leq 1$  is hold for all positive $a<a^*$. This completes the proof of theorem(\ref{theory.mainTheorem}) that 
$\lim\limits_{n \rightarrow \infty}  q_i(n)\stackrel{a.s.}{=}1$
\end{proof}
\begin{qed} \end{qed}
\end{proof}
In the next theorem, the efficiency of the proposed eDLA-based and standard sampling methods  has been compared. Suppose that $n_{eDLA}$ and $n_{standar}$ represent the average number of samples in eDLA-based and standard sampling method respectively. in the context of finding optimal stochastic sub-graph we define accuracy rate of an algorithm as the probability that the result of algorithm is to be optimal solution. 
\begin{theorem}
\label{theorem.numberofsamples}
For a certain sufficiently large accuracy rate, $n_{standar}>n_{eDLA}$
\end{theorem}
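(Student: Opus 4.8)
The plan is to bracket the two sample counts: to prove a lower bound on $n_{standar}$ and an upper bound on $n_{eDLA}$, each as a function of the requested accuracy rate $1-\delta$, and then to observe that the former dominates once $\delta$ is small. Fix such a $\delta>0$, write $\Delta>0$ for the gap between the weight of the optimal sub-graph $s_i$ and the best competing sub-graph, and $\sigma^2$ for an upper bound on the edge-weight variances. Standard sampling draws a common number $N$ of i.i.d. samples from every edge and returns the optimum of the deterministic graph in which each $e_{(m,n)}$ is replaced by its empirical mean. For this output to coincide with the true optimal sub-graph with probability at least $1-\delta$, the empirical mean of every \emph{pivotal} edge --- one whose perturbation by $\Delta$ can change which sub-graph is optimal --- must stay within $O(\Delta)$ of its true mean; a central-limit / concentration estimate for a single such edge forces $N\ge c\,\sigma^2\Delta^{-2}\ln(1/\delta)$ for a constant $c$ depending only on the edge distribution. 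Since all $|E|$ edges are sampled $N$ times, $n_{standar}=N\,|E|\ \ge\ c\,|E|\,\sigma^2\Delta^{-2}\ln(1/\delta)$. The essential feature is the unavoidable factor $|E|$: standard sampling pays for every edge, including the many that lie on no near-optimal sub-graph.

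Next I would bound $n_{eDLA}$. By Theorem~\ref{theory.mainTheorem} there is a learning rate $a^{\ast}=a^{\ast}(\delta)$ for which the proposed algorithm converges to $s_i$ with probability at least $1-\delta$, and by Lemma~\ref{lemma.nonnegativeboodan} the sub-graph probability $\{q_i(k)\}$ is a bounded sub-martingale whose only absorbing states are $0$ and $1$ (Corollary~\ref{corollary.absorbingq}). Turning this into a stopping-time estimate for the event ``$q_i$ exceeds the stopping threshold $P_s$'' yields an expected number of runs $R(a^{\ast},\delta)$ of the usual $L_{R-I}$ order, $R=O\!\left(\tfrac{1}{a^{\ast}}\ln\tfrac{1}{1-P_s}\right)$. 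Each run samples only the at most $n-1$ edges of the sub-graph it constructs, and since the action probabilities on edges outside $s_i$ decay geometrically after the transient phase, the samples wasted on non-optimal edges contribute only a lower-order term; hence $n_{eDLA}\le (n-1)\,R(a^{\ast},\delta)\,(1+o(1))$.

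Finally I would compare the two expressions. Both bounds grow only slowly in the accuracy requirement --- $\ln(1/\delta)$ for standard sampling and, through $a^{\ast}(\delta)$ together with $P_s$, a comparable dependence for eDLA --- while $n_{standar}$ carries the multiplicative factor $|E|$ against the factor $n-1$ carried by $n_{eDLA}$. For connected instances $|E|\ge n-1$, and for dense graphs $|E|=\Theta(n^{2})\gg n-1$, so there is a threshold $1-\delta_{0}$ such that every target accuracy rate above it yields $n_{standar}>n_{eDLA}$, which is the assertion. The main obstacle is precisely the eDLA estimate: Theorem~\ref{theory.mainTheorem} supplies convergence but not a finite-time sample-complexity bound, so the delicate work is to convert the sub-martingale property of Lemma~\ref{lemma.nonnegativeboodan} and the $L_{R-I}$ recursion into an honest bound on $R(a^{\ast},\delta)$ and on the number of exploratory samples spent before the non-optimal action probabilities become negligible.
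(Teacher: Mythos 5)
There is a genuine gap, and it sits exactly where you place it yourself: the upper bound on $n_{eDLA}$. Your argument needs an expected number of runs $R(a^{\ast},\delta)=O\bigl(\tfrac{1}{a^{\ast}}\ln\tfrac{1}{1-P_s}\bigr)$ and a claim that samples spent on non-optimal edges are lower order, but nothing in Theorem~\ref{theory.mainTheorem}, Lemma~\ref{lemma.nonnegativeboodan} or Corollary~\ref{corollary.absorbingq} yields a finite-time or expected-time bound: they give only almost-sure convergence of the bounded sub-martingale $\{q_i(k)\}$ to an absorbing state, with no rate. Worse, your final comparison requires that the dependence of $a^{\ast}$ on $\delta$ be "comparable" to $\ln(1/\delta)$, whereas the proof of Theorem~\ref{theory.mainTheorem} is a pure existence statement (for every $\epsilon$ there is some $a^{\ast}<1$); as $\delta\to 0$ you must take $a^{\ast}\to 0$, and the factor $1/a^{\ast}$ in your $R$ is completely uncontrolled relative to $\ln(1/\delta)$, so the multiplicative advantage $|E|$ versus $n-1$ cannot be shown to dominate. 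In short, the skeleton (lower-bound standard, upper-bound eDLA, compare $|E|$ with $|V|-1$) matches the paper's final inequality, but the middle of your proof is an unproven sample-complexity statement, so the theorem is not established.

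The paper avoids needing any convergence rate by parameterizing accuracy differently: a confidence-interval argument fixes a number $n$ of samples that each edge of the optimal sub-graph must receive to guarantee the accuracy, the same $n$ for both methods. Standard sampling then costs $n\,|E|$ edge samples, while for eDLA the waiting-time identity of Lemma~\ref{lemma.theorm2.nandnprim}, $E[n^{\prime}]=\sum_{t=1}^{n}E[1/q_i(t)]$, counts the expected runs needed to accumulate $n$ samples of $s_i$; Jensen's inequality combined with the sub-martingale property of Lemma~\ref{lemma.nonnegativeboodan} gives $E[n_i(t+1)]\leq E[n_i(t)]$, hence the Cesàro average $\xi(i,n)$ of Lemma~\ref{lemma.theorm2.decreasingproperty} is decreasing, and each run samples at most $|V|-1$ edges. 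This yields $\eta=n_{standard}/n_{eDLA}\geq |E|/\bigl(\xi(i,n)(|V|-1)\bigr)$, which exceeds $1$ once $n$ is large enough that $\xi(i,n)\leq |E|/(|V|-1)$. If you want to rescue your route, you would have to either prove the finite-time bound you assert (a substantially harder task for $L_{R-I}$ networks) or switch to the paper's device of charging both methods the same per-edge sample requirement and bounding only the expected overhead $E[1/q_i(t)]$ per successful sample.
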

\begin{proof}
The proof of this theorem is done in three stages: in the first stage we compute average number of required samples to the number of samples from optimal sub-graph as a function of the probability of choosing optimal sub-graph. in the next stage we show that this function is a decreasing function and finally the average number of the samples in eDLA-based method is compared with same number in the standard sampling method.\\
Suppose that $X_1,X_2,\ldots,X_n$ is a sample from a normal distribution having unknown mean $\mu$ and variance $\sigma^2$. Suppose that we wish to construct a $100(1-\alpha)\%$ confidence interval for $\mu$. Letting $\overline{X}=\sfrac{\sum_{i=1}^{n}{X}}{n}$ and $S^2=\sfrac{\sum_{i=1}^n{(X_i-\overline{X})^2}}{(n-1)}$ denote the mean and sample variance. It has been shown that $\sqrt{n}\sfrac{(\overline{X}-\mu)}{S}$ is a t-random variable with $(n-1)$ degrees of freedom and for any $\alpha \in (0,0.5)$ we have
$$Prob \left\lbrace -t_{\sfrac{\alpha}{2},n-1} < \sqrt{n}\sfrac{(\overline{X}-\mu)}{S} < t_{\sfrac{\alpha}{2},n-1} \right\rbrace=1-\alpha$$
We can say that "with $100(1-\alpha)\%$ confidence"\cite{Ross2004},\cite{Papoulis1991} $$\mu \in (\overline{X}-t_{\sfrac{\alpha}{2},n-1},\overline{X}+t_{\sfrac{\alpha}{2},n-1})$$
Based on the above equations from sampling theory, to obtain a predefined error $d$ the number of required samples should be atleast $n_d$  that we have $d=t_{\sfrac{\alpha}{2},n_d-1}\frac{S}{\sqrt{n_d}}$. This relation shows that for obtain the more accurate interval, more samples are required. This show that, to obtain a certain accuracy for optimal sub-graph in stochastic graph, each edges of this sub-graph to be sampled a certain number of times.
\begin{lemma}
\label{lemma.theorm2.nandnprim}
Suppose that in proposed eDLA-based algorithm we wish to sub-graph $s_i$ to be sampled $n$ times. The proposed algorithm must be run $n^\prime$ times and we have:
\begin{equation} \nonumber
E[n^\prime]=\sum_{t=1}^n{E[\frac{1}{q_i(t)}]}
\end{equation}
 Where $q_i(t)$ represents the probability of choosing $s_i$ by eDLA at time $j$ and $E[.]$ represents the mathematical expectation
\end{lemma}
\begin{proof}
Suppose that the probability of selecting sub-graph $s_i$ by eDLA at time t to be $q_i(t)$. To obtain exactly one sample from $s_i$ the algorithm must be run $n_i(t)$ times and we have:
\begin{equation}
\label{equation.lemma.nandnprim}
E[n_i(t)|\myvec{q}(t)]=\sum_{h=1}^{\infty}{hq_i(t){(1-q_i(t))}^{h-1}}=\frac{1}{q_i(t)}
\end{equation}
by take an expectation from two sides of Equation(\ref{equation.lemma.nandnprim}) we have:
\begin{equation}
\label{equation.lemma.nandnprim:2}
E[E[n_i(t)|\myvec{q}(t)]]=E[n_i(t)]= {E[\frac{1}{q_i(t)}]}
\end{equation}
and finally:
\begin{equation}
n^\prime=\sum_{t=1}^n{E[n_i(t)]}=\sum_{t=1}^n{E[\frac{1}{q_i(t)}]}
\end{equation}
that completes the proof
\end{proof}
\begin{qed} \end{qed}
\begin{remark}: 
From Jensen inequality we know that if X is a random variable and $\varphi$ is a convex function, then $\varphi\left(E\left[X\right]\right) \leq E\left[\varphi(X)\right]$ since $\frac{1}{x}$  is a convex function, we have $E\left[n_i(t)\right]= E\left[ \sfrac{1}{q_i(t)}\right] \geq \sfrac{1}{E\left[q_i(t)\right]}$
\\from lemma(\ref{lemma.nonnegativeboodan}) we know that for optimal sub-graph $s_i$ the inequality $E[q_i(t+1)] \geq E[q_i(t)]$ is true and there for we conclude that $E[n_i(t+1)] \leq E[n_i(t)]$. This inequality shows that the sampling efficiency of eDLA-based proposed algorithm improves with time.
\end{remark}
Let define $\xi(i,n)=\frac{1}{n}\sum_{k=1}^{n}{E[n_i(k)]}$. in next lemma we show the decreasing property of $\xi(i,n)$ for optimal sub-graph $s_i$ 
\begin{lemma}
\label{lemma.theorm2.decreasingproperty}
Suppose that $s_i$ is optimal sub-graph. For any $n_0,n_1 \in N : n_0 \leq n_1 $ we have $$\xi(i,n_1)\leq \xi(i,n_0)$$
\end{lemma}
\begin{proof}:
We show that for any $n \in N$  we have $\xi(i,n+1)\leq \xi(i,n)$. From definition of $\xi(i,n)$ we have:
\begin{eqnarray}
\label{equation.lemmadecreasing:1}
\xi(i,n+1)&=&\frac{1}{n+1}\sum_{k=1}^{n+1}{E[n_i(k)]}=\frac{n\xi(i,n)+E[n_i(n+1)]}{n+1}\nonumber \\
&& =\xi(i,n)+\frac{1}{n+1}\left( E[n_i(n+1)]-\xi(i,n)\right)
\end{eqnarray}
But from previous lemma we know that for optimal sub-graph $s_i$ inequality $E[n_i(t+1)] \leq E[n_i(t)]$ is true
\begin{eqnarray}
\label{equation.lemmadecreasing:2}
\xi(i,n)=\frac{1}{n}\sum_{k=1}^{n}{E[n_i(k)]} \geq \frac{\sum_{k=1}^{n}{E[n_i(n+1)]}}{n}=E[n_i(n+1)]
\end{eqnarray}
From two inequality (\ref{equation.lemmadecreasing:2}) and (\ref{equation.lemmadecreasing:1}) we conclude that $\xi(i,n+1)\leq \xi(i,n)$. that completes the proof.
\end{proof}
\begin{qed} \end{qed}
Suppose that the standard sampling method have got to sample n times from each edges to guarantee a specific accuracy rate for optimal sub-graph that imply $n\times |E|$ edges to be sampled. to guarantee same accuracy, eDLA-based proposed method have to at least $n$ samples from optimal sub-graph. From lemma(\ref{lemma.theorm2.nandnprim}) we know that to obtain n samples from $s_i$ the proposed algorithm have got to run $n\times\xi(i,n)$ times. This implies at most $n\times\xi(i,n)\times (|V|-1)$ edges to be sampled. We have:
$$
\eta=\frac{n_{standard}}{n_{eDLA}} \geq \frac{n\times |E|}{n\times\xi(i,n)\times (|V|-1)} 
$$
Since $\xi(i,n)$ is decreasing function 
$$\exists k \; \forall n>k:\xi(i,n) \leq \frac{|E|}{|V|-1}$$
This complete the proof that for high accuracy rate values  $\eta>1$
\end{proof}
\begin{qed} \end{qed}
\begingroup
\tikzset{every picture/.style={scale=0.78,every picture/.style={}}}

 \section{Experiments:}
To study the feasibility of the proposed algorithm and correctness of theorems ~\ref{theory.mainTheorem} and ~\ref{theorem.numberofsamples} two set of experiments are conducted on some stochastic graphs. Stochastic Shortest Path Problem and Stochastic Minimum Spanning Tree Problem that referred as SSPP and SMSTP respectively. The former is solved by eDLA on directed stochastic graph Graph2 that borrowed from \cite{H.Beigy2006} and the latter is solved on undirected stochastic graph Alex1-a that borrowed from \cite{Hutson}.\\
In \cite{H.Beigy2006} the authors proposed an algorithm based on the Distributed Learning Automata to solve the SSPP. In this method at each stage, source automaton (corresponding to the source node)selects one of its actions (as a sample realization of its action probability vector). The selected action activates another Learning Automata. The process of choosing an action and activating an automaton is repeated until destination automaton is reached. The weight of constructed path is used to reward or penalize the actions selected by Learning Automata in DLA.\\
In \cite{AkbariTorkestani2011} the authors proposed an algorithm based on the colony of Learning Automata to solve SMSTP. In this method, at each time, a node in graph is randomly selected and associative learning automaton selects an action corresponding to an edge of this node  so that prevent the loop construction. Consequently, at the end of each stage the selected edges construct a tree. The weight of constructed tree is used to reward or penalize the action chosen by Learning Automaton. 
It has been shown that the proposed algorithm more efficient than other methods for finding the minimum spanning tree in stochastic graphs\\

 In this section we use an eDLA to solve these problems. At each stage, a Learning Automaton is selected as ROOT. In SSPP case, the ROOT is source automaton and in SMSTP each node can be selected randomly as ROOT. The activity level of ROOT is set to \textit{Fire}('Fi'). 
Each automaton at \textit{Fire} level of activity chooses one its available actions based on the selection action probability vector. In SMSTP the next \textit{Fire} automaton is selected randomly from the set of automata with \textit{Active} level of activity. In SSPP, the current action determines specific learning automata in the set of automata with \textit{Active} level of activity  that should be fired. This process (fire and action) continues until the reaching to destination node (destination automaton downgrades to Off level of activity) or constructing a tree(all of the learning automata downgrade to Off level of activity)in SSPP and SMSTP respectively\\
The process of finding a path or tree is repeated until the  stopping criteria is reached which at this point the last sample path or tree has minimum expected average weight. The algorithm stops if the product of probability of selecting the edge of sampled sub-graph is greater than a predefined threshold or the number of sampled sub-graph is reached to a certain number.
\\The differences between the mechanisms that used by eDLA to solving each of the above problems are described in table ~\ref{table.twomechanisms} 

\begin{table}
  \centering
  \caption{Differences between SSPP and SMSTP solving mechanisms by eDLA}
  \label{table.twomechanisms}
    \begin{tabular}{lll}
    \toprule
   &Problem \\ \cmidrule(l){2-3}
    & \textbf{Stocahstic Shortest Path} & \textbf{Stochastic Minimum Spanning Tree} \\
    \midrule
    \textbf{ROOT?} & Deterministic: S & Stochastic: uniformly random selection \\
    \textbf{FIRE?} & Deterministic & Stochastic \\
    \textbf{ACTIVE?} & Adjacency by \textit{Fire} automaton & Adjacency by \textit{Fire} automaton  \\
    \textbf{Termination?} & D $\stackrel{?}{=}$ Off & All Automata $\stackrel{?}{=}$ Off \\
    \bottomrule
    \end{tabular}%
  \label{tab:addlabel}%
\end{table}%

Graph2 is a directed graph with 15 nodes, 42 arcs, $v_s=1$, $v_d=15$ and optimal path $\pi^*= (1,4,12,14,15)$. Edge cost distribution is given in table~\ref{table.graph2beigy}

\begin{table}
  \centering
  \caption{Weight Distribution of Graph2}
    \label{table.graph2beigy}
    \begin{tabular}{llllllllll}
    \toprule
        \multicolumn{1}{c}{Edge}&&
        \multicolumn{4}{l}{Weights}
       &\multicolumn{4}{c}{Probabilities}\\
    \midrule    
    (1,2) & 19    & 25    & 36    &       &       & 0.6   & 0.3   & 0.1   &  \\
    (1,3) & 21    & 24    & 25    & 29    &       & 0.5   & 0.2   & 0.2   & 0.1 \\
    (1,4) & 11    & 13    & 16    &       &       & 0.4   & 0.4   & 0.2   &  \\
    (2,11) & 24    & 28    & 31    &       &       & 0.5   & 0.3   & 0.2   &  \\
    (2,5) & 21    & 30    &       &       &       & 0.7   & 0.3   &       &  \\
    (3,6) & 18    & 24    &       &       &       & 0.7   & 0.3   &       &  \\
    (2,6) & 13    & 37    & 39    &       &       & 0.6   & 0.2   & 0.2   &  \\
    (2,3) & 11    & 20    & 24    &       &       & 0.6   & 0.3   & 0.1   &  \\
    (3,7) & 23    & 30    & 34    &       &       & 0.4   & 0.3   & 0.3   &  \\
    (3,8) & 14    & 23    & 34    &       &       & 0.5   & 0.4   & 0.1   &  \\
    (3,4) & 22    & 30    &       &       &       & 0.7   & 0.3   &       &  \\
    (4,9) & 35    & 40    &       &       &       & 0.6   & 0.4   &       &  \\
    (4,12) & 16    & 19    & 37    &       &       & 0.5   & 0.4   & 0.1   &  \\
    (5,13) & 28    & 35    & 37    & 40    &       & 0.4   & 0.3   & 0.2   & 0.1 \\
    (5,15) & 29    & 32    &       &       &       & 0.7   & 0.3   &       &  \\
    (5,6) & 18    & 25    & 29    &       &       & 0.5   & 0.3   & 0.2   &  \\
    (5,10) & 27    & 33    & 40    &       &       & 0.4   & 0.3   & 0.3   &  \\
    (5,7) & 15    & 17    & 19    & 26    &       & 0.3   & 0.3   & 0.3   & 0.1 \\
    (6,13) & 21    & 23    &       &       &       & 0.5   & 0.5   &       &  \\
    (6,7) & 12    & 23    & 31    &       &       & 0.5   & 0.3   & 0.2   &  \\
    (7,10) & 19    & 23    & 37    &       &       & 0.6   & 0.2   & 0.2   &  \\
    (7,8) & 12    & 15    & 22    & 24    &       & 0.3   & 0.3   & 0.3   & 0.1 \\
    (8,7) & 14    & 34    & 39    &       &       & 0.6   & 0.2   & 0.2   &  \\
    (7,6) & 12    & 23    & 31    &       &       & 0.5   & 0.3   & 0.2   &  \\
    (8,14) & 14    & 15    & 27    & 32    &       & 0.3   & 0.3   & 0.2   & 0.2 \\
    (8,9) & 13    & 31    & 32    &       &       & 0.8   & 0.1   & 0.1   &  \\
    (4,8) & 13    & 23    & 34    &       &       & 0.4   & 0.3   & 0.3   &  \\
    (7,9) & 10    & 17    & 20    &       &       & 0.6   & 0.3   & 0.1   &  \\
    (9,10) & 16    & 18    & 36    & 39    &       & 0.3   & 0.3   & 0.2   & 0.2 \\
    (9,15) & 12    & 13    & 25    & 32    &       & 0.4   & 0.3   & 0.2   & 0.1 \\
    (9,14) & 19    & 24    & 29    &       &       & 0.4   & 0.3   & 0.3   &  \\
    (10,13) & 14    & 20    & 25    & 32    &       & 0.3   & 0.3   & 0.2   & 0.2 \\
    (10,15) & 15    & 19    & 25    &       &       & 0.4   & 0.3   & 0.3   &  \\
    (10,14) & 23    & 34    &       &       &       & 0.9   & 0.1   &       &  \\
    (11,13) & 13    & 31    & 25    &       &       & 0.6   & 0.3   & 0.1   &  \\
    (5,11) & 18    & 19    & 20    & 23    &       & 0.3   & 0.3   & 0.3   & 0.1 \\
    (6,11) & 10    & 19    & 39    &       &       & 0.5   & 0.4   & 0.1   &  \\
    (8,12) & 15    & 36    & 39    &       &       & 0.5   & 0.3   & 0.2   &  \\
    (9,12) & 16    & 22    &       &       &       & 0.7   & 0.3   &       &  \\
    (12,14) & 10    & 13    & 18    & 34    &       & 0.3   & 0.3   & 0.3   & 0.1 \\
    (13,15) & 12    & 31    &       &       &       & 0.9   & 0.1   &       &  \\
    (14,15) & 14    & 16    & 32    &       &       & 0.5   & 0.3   & 0.2   &  \\
    \bottomrule
    \end{tabular}%
  \label{tab:addlabel}%
\end{table}%

Alex1-a is a undirected graph with 8 nodes, 14 edges and optimal spanning tree $\tau^*=(1,3)(2,3)(2,5)(3,4)(5,6)(6,7)(7,8)$ Edge weight distribution is given in table ~\ref{table.alex1a}

\begin{table}
  \centering
  \caption{Weight Distribution of Alex1-a}
    \label{table.alex1a}
    \begin{tabular}{@{}clllll@{}}
    \toprule
        \multicolumn{1}{c}{Edge}&
        \multicolumn{2}{c}{Weights}&
       &\multicolumn{2}{c}{Probabilities}\\
 
    \midrule
    (1,2) & 70    & 94    &       & 0.95  & 0.05 \\
    (1,3) & 25    & 52    &       & 0.80  & 0.20 \\
    (1,4) & 42    & 61    &       & 0.70  & 0.30 \\
    (2,3) & 15    & 43    &       & 0.80  & 0.20 \\
    (2,5) & 26    & 50    &       & 0.85  & 0.15 \\
    (3,4) & 21    & 68    &       & 0.90  & 0.10 \\
    (3,6) & 65    & 75    &       & 0.60  & 0.40 \\
    (3,7) & 89    & 78    &       & 0.70  & 0.30 \\
    (4,7) & 90    & 96    &       & 0.95  & 0.05 \\
    (4,8) & 89    & 96    &       & 0.85  & 0.15 \\
    (5,6) & 32    & 67    &       & 0.80  & 0.20 \\
    (6,7) & 16    & 42    &       & 0.70  & 0.30 \\
    (7,8) & 3     & 15    &       & 0.60  & 0.40 \\
    (6,8) & 98    & 45    &       & 0.80  & 0.20 \\
    \bottomrule
    \end{tabular}%
  \label{tab:addlabel}%
\end{table}%

In all the simulations that are presented in the rest of this paper, \lri schema is used as learning algorithm for updating the action selection probability vectors. 
\\ In the SSPP, each algorithm (Porposed and DLA-based\cite{H.Beigy2006}) is tested on Graph2. each algorithm is terminated when the number of sampled path is greater than 10000 or probability of selected path is greater than 90\%  The results for each algorithm are summarized in table~\ref{tables.SSPP}. Every value in this table is averaged over 50 runs. To compare the results, 4 metrics are calculated: the average number of samples that to be taken from all edges in the graph(AS). The average number of iterations for a converged(AI), the average required time for a converged run(AT) and percentage of converged runs(PC).
\\ A similar method was applied to solve the SMSTP. The results for each algorithm (proposed and LA-based\cite{AkbariTorkestani2011}) are summarized in table~\ref{tables.SMSTP}.
\\From these tables of the results, the following points can be made:
 \begin{itemize}
 \item[1] The proposed network structure of the learning automata is quite flexible so that it is able to solve different network optimization problems. Furthermore, since the most combinatroial optimization problems (COPs)can be formulated as optimization problems on a weighted
graph, the proposed eDLA structure will also be able to solve combinatorial optimization problem.  
\item[2] Proposed threshold computation improves the rate of convergence. This is due because of the including the variance of obtained solutions in the calculating of the threshold value. The proposed threshold value is more realistic criterion for comparison than the dynamic threshold value that has been proposed in previous works\cite{H.Beigy2006}.
It seems that this variance aware threshold is useful to prevent eDLA from becoming stuck in local optima. The value of thresholds in proposed method and one that is used by ~\cite{H.Beigy2006} in compared to the average path weight of shortest path in Graph2 is illustated in figure ~\ref{figures.wthcmp} 
\begin{figure}
\includegraphics[scale=0.7]{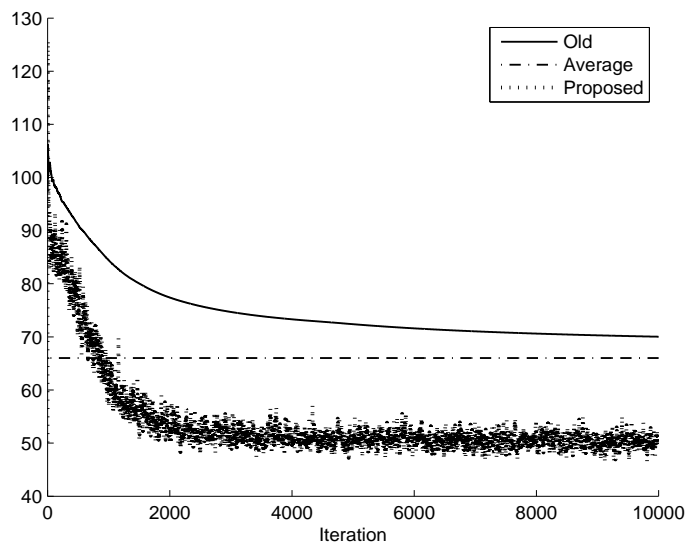} 
\caption{Difference between the values of thresholds used to evaluate eDLA compared to the average path weight in Graph2 }
\label{figures.wthcmp}
\end{figure}

\item[3]The results show that the proposed algorithm can be used as a uniform framework to solve a wide range of stochastic network optimization problems through sampling.
\item[4]Comparing the percentage of convergence (PC column) in tables ~\ref{tables.SMSTP} and ~\ref{tables.SSPP} show that the proposed variance aware threshold value can be to improve the convergence rate of the algorithm. This improvement such that  it appears approximately independent of learning rate value, the convergence is guaranteed.

\begin{table*}
  \centering
  \ra{1.2}
  \caption{The simulation results of SSPP}
  \label{tables.SSPP}
    \begin{tabular}{@{}ccccccccccc@{}}
    \toprule
       \multicolumn{1}{c}{}& \phantom{abc}
       & \multicolumn{4}{c}{DLA baed\cite{H.Beigy2006}}&      \phantom{abc}&\multicolumn{4}{c}{Proposed Algorithm}\\
       \cmidrule{3-6}\cmidrule{8-11}
$\alpha$ &&AS&AI&AT&PC&&AS&AI&AT&PC\\
 \midrule
    0.003 &       & 16825 & 3918  & 0.052 & 100\% &       & 20863 & 4869  & 0.058 & 100\% \\

    0.004 &       & 13244 & 3094  & 0.039 & 100\% &       & 15553 & 3628  & 0.059 & 100\% \\
    0.005 &       & 10125 & 2357  & 0.041 & 100\% &       & 12422 & 2900  & 0.062 & 100\% \\
    0.006 &       & 9377  & 2131  & 0.062 & 100\% &       & 10287 & 2404  & 0.039 & 100\% \\
    0.007 &       & \textbf{7548}  & \textbf{1769}  & \textbf{0.045} & \textbf{100\%} &       & 9050  & 2114  & 0.041 & 100\% \\
    0.008 &       & 6881  & 1614  & 0.036 & 98\%  &       & 7892  & 1844  & 0.055 & 100\% \\
    0.009 &       & 5878  & 1375  & 0.033 & 98\%  &       & 7052  & 1648  & 0.050 & 100\% \\
    0.01  &       & 5485  & 1287  & 0.043 & 98\%  &       & 5968  & 1391  & 0.052 & 100\% \\
    0.02  &       & 3309  & 796   & 0.047 & 90\%  &       & 3193  & 747   & 0.043 & 100\% \\
    0.03  &       & 2127  & 504   & 0.054 & 86\%  &       & 2250  & 531   & 0.053 & 100\% \\
    0.04  &       & 1993  & 475   & 0.070 & 76\%  &       & 1755  & 415   & 0.038 & 100\% \\
    0.05  &       & 1593  & 387   & 0.047 & 76\%  &       & \textbf{1354}  & \textbf{319}   & \textbf{0.038} & \textbf{100\%} \\
    0.06  &       & 1338  & 320   & 0.061 & 68\%  &       & 1339  & 316   & 0.050 & 98\% \\
    0.07  &       & 1394  & 328   & 0.120 & 58\%  &       & 953   & 222   & 0.038 & 100\% \\
    0.08  &       & 1132  & 269   & 0.077 & 60\%  &       & 1060  & 257   & 0.068 & 100\% \\
    0.09  &       & 758   & 180   & 0.062 & 66\%  &       & 1098  & 267   & 0.063 & 96\% \\
    \bottomrule
    \end{tabular}%
  \label{tab:addlabel}%
\end{table*}%

\item[5] In order to compare the algorithms, minimum number of samples and iterations require to guarantee the 100\% converges is to be considered. This cases indicated by bold letters in tables ~\ref{tables.SSPP} and ~\ref{tables.SMSTP}. In the case of SSPP, the results of table~\ref{tables.SSPP} show that the proposed algorithm can reduce the average number of required samples and average number of iterations to $\sfrac{1}{5}$ (80\% improvement). The same matter applies to SMSTP results.

\begin{table}
  \centering
  \caption{The simulation results of SMSTP}
  \label{tables.SMSTP}
    \begin{tabular}{@{}lccccccc@{}}
    \toprule
    \multicolumn{1}{c}{}&  \multicolumn{3}{c}{LA based\cite{AkbariTorkestani2011}}&  &\multicolumn{3}{c}{Proposed eDLA-based}\\
       \cmidrule{2-4}\cmidrule{6-8}
$\alpha$ &AS&AI&PC&&AS&AI&PC\\
    \midrule
    0.007 & 79430 & 11348 & 100\% &       & 96268 & 13753 & 100\% \\
    0.008 & 78068 & 10393 & 100\% &       & 84479 & 12069 & 100\% \\
    0.009 & 61280 & 8755  & 100\% &       & 75895 & 10843 & 100\% \\
    0.01  & \textbf{67330} & \textbf{7747}  & \textbf{100\%} &       & 65389 & 9342  & 100\% \\
    0.02  & 65268 & 4064  & 88\%  &       & 31484 & 4498  & 100\% \\
    0.03  & 98431 & 3456  & 68\%  &       & 18770 & 2682  & 100\% \\
    0.04  & 128830 & 2783  & 60\%  &       & 13650 & 1950  & 100\% \\
    0.05  & 139707 & 2457  & 50\%  &       & 11816 & 1688  & 100\% \\
    0.06  & 191879 & 2103  & 36\%  &       & 9865  & 1409  & 100\% \\
    0.07  & 170472 & 4058  & 16\%  &       & \textbf{7101}  & \textbf{1015}  & \textbf{100\%} \\
    \bottomrule
    \end{tabular}%
  \label{tab:addlabel}%
\end{table}%

\item[6] 
Convergence behavior illustrated by the any-time curve of  algorithm. This property is showed by plotting the development of the probability of the optimal sub-graph in time. To do so, probability of optimal sub-graph at each step of algorithm is defined as the product of probabilities of automata actions corresponding to the edges that constructing optimal sub-graph. The term "any-time" refers to the property that the search can be stopped at any time and the proposed algorithm will have some sub-optimal solution. A point with coordinate $(x,y)$  in any-time curve gives us the average number of iterations $x$  required to obtain the optimal solution with probability of $y$. The "any time" plots show the correctness of theorem \ref{theory.mainTheorem}. Furthermore show some other features of the proposed algorithm.
\\In another experiment, the proposed algorithm is used to finding optimal shortest path in stochastic graph Graph2. The algorithm terminates when the number of sampled path is greater than or equal to 10000. the value of optimal path probability in each iteration is computed and averaged on 10 simulations. The results are shown in figure
~\ref{figures.compareallgraph2} for different learning rate(a). Looking at different curves in figure
~\ref{figures.compareallgraph2} indicated that the average iterations required by the proposed algorithm for finding the optimal solution is dependent on learning rate.
\item[7] The "any-time" curves of the proposed algorithm and DLA-based algorithm \cite{H.Beigy2006} for SSPP has been shown in figures ~\ref{figures.khalilissppall} and ~\ref{figures.beigyall} respectively. For each individual learning rate, the slop of POP in the proposed algorithm is higher than that one in DLA-based algorithm. Further more the any-time curve in the new algorithm is more stable than that on DLA-based (figure \ref{figures.comparesspp}) 
\item[8]The any-time curve of proposed algorithm for SMSTP in Alex1-a is shwon in figure~\ref{figures.khaliliallpopsmstp}. As indicated in that, the speed of converge to optimal path independent on the learning rate. By a low learning rate, algorithm converge slowly but stable to the optimal solution. High learning rates leads to rapid convergence to optimal solution, but may converges to sub-optimal solution.
In figure ~\ref{figures.akbarikhalilicmp} the probability of optimal spanning tree in new proposed eDLA-based and old LA-based are compared. The slop of curve in new method is higher than that one in old. Besides, the new method is more stable than old method is. As a result at each stages a sampled tree in new method is more probable to be optimal spanning tree than that one in old method.

\end{itemize}

\begin{figure}
\input{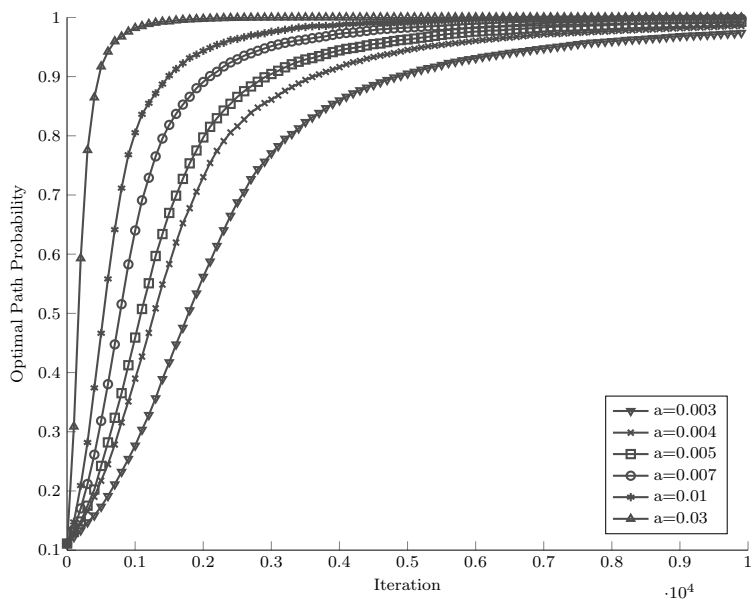}
\caption{Probability of Optimal Path for Graph2 in new eDLA-based proposed method for different learning rates}
\label{figures.compareallgraph2}
\end{figure}

 \begin{figure}
%
%
%
%
\begin{tikzpicture}

\begin{axis}[%
width=4.52083333333333in,
height=3.565625in,
scale only axis,
xmin=0,
xmax=6000,
xlabel={Iteration},
ymin=0.1,
ymax=0.9,
ylabel={Optimal Path Probability},
axis x line*=bottom,
axis y line*=left,
legend style={at={(0.97,0.03)},anchor=south east,draw=black,fill=white,legend cell align=left}
]
\addplot [
color=black,
solid,
mark=triangle,
mark options={solid}
]
table[row sep=crcr]{
1 0.111111111111111\\
151 0.126745255644033\\
301 0.14734935413361\\
451 0.171297141788272\\
601 0.197780315981089\\
751 0.227773875091583\\
901 0.261552536188057\\
1051 0.299053768573308\\
1201 0.339263815453853\\
1351 0.37894417267084\\
1501 0.421433151231234\\
1651 0.467279090151664\\
1801 0.509784470239245\\
1951 0.550698720361289\\
2101 0.589097546914585\\
2251 0.624914280745062\\
2401 0.659759497427793\\
2551 0.690963386463071\\
2701 0.717935406868984\\
2851 0.741367658334342\\
3001 0.763143584481626\\
3151 0.781936414302958\\
3301 0.799036719848341\\
3451 0.814976073722672\\
3601 0.828237849637138\\
3751 0.839138435240317\\
3901 0.850328374939038\\
4051 0.859853947444202\\
4201 0.867422441882868\\
4351 0.873186343423007\\
4501 0.877175086239974\\
4651 0.881275674328729\\
4801 0.887345387908807\\
4951 0.890446122285143\\
5101 0.892443381845852\\
5251 0.890981135333009\\
5401 0.895118475679882\\
5551 0.896695103233126\\
5701 0.899839183797501\\
};
\addlegendentry{a=0.003};

\addplot [
color=black,
solid,
mark=asterisk,
mark options={solid}
]
table[row sep=crcr]{
1 0.111111111111111\\
151 0.133119052893697\\
301 0.161732732104448\\
451 0.193913842568997\\
601 0.236265358020877\\
751 0.281571987418411\\
901 0.335417890635162\\
1051 0.392074942891154\\
1201 0.449027375433373\\
1351 0.507643636211228\\
1501 0.56407638400378\\
1651 0.615720766787878\\
1801 0.662094889297385\\
1951 0.702786974229208\\
2101 0.739105313366109\\
2251 0.768583540262204\\
2401 0.793272275268772\\
2551 0.812801417478482\\
2701 0.830138094055542\\
2851 0.844796055324976\\
3001 0.857786061122482\\
3151 0.86758428170383\\
3301 0.874672825486235\\
3451 0.881946077485603\\
3601 0.887630980193465\\
3751 0.88796342804656\\
3901 0.883303736198954\\
4051 0.885963971390622\\
4201 0.860437378966888\\
4351 0.862686829842028\\
4501 0.865840005958065\\
4651 0.874984365708313\\
4801 0.883265935866594\\
4951 0.893275965311343\\
5101 0.891918038136083\\
5251 0.897125457900381\\
};
\addlegendentry{a=0.004};

\addplot [
color=black,
solid,
mark=square,
mark options={solid}
]
table[row sep=crcr]{
1 0.111111111111111\\
151 0.141067708022565\\
301 0.180350755045871\\
451 0.230771839052609\\
601 0.287520147432723\\
751 0.353627525813265\\
901 0.429054513819868\\
1051 0.496703007334643\\
1201 0.566317533994283\\
1351 0.631174098524308\\
1501 0.686943908754168\\
1651 0.73375307876274\\
1801 0.770446376368616\\
1951 0.799143249846866\\
2101 0.82272114409843\\
2251 0.843460439447641\\
2401 0.858354448321872\\
2551 0.869065238993185\\
2701 0.879015672521714\\
2851 0.88067008226703\\
3001 0.882978145055423\\
3151 0.88170824361856\\
3301 0.887254756968228\\
3451 0.894085612787493\\
3601 0.896657468227269\\
};
\addlegendentry{a=0.005};

\addplot [
color=black,
solid,
mark=triangle,
mark options={solid,,rotate=270}
]
table[row sep=crcr]{
1 0.111111111111111\\
151 0.158471315196613\\
301 0.220132011196718\\
451 0.297081265276968\\
601 0.389080465500541\\
751 0.487026459977872\\
901 0.583376211727544\\
1051 0.665752362677408\\
1201 0.729309913328732\\
1351 0.778274766389059\\
1501 0.816069209405263\\
1651 0.843117188654758\\
1801 0.861917463590882\\
1951 0.873063387233643\\
2101 0.879590412899002\\
2251 0.88545126314465\\
2401 0.890713218575747\\
2551 0.89675440750833\\
};
\addlegendentry{a=0.007};

\addplot [
color=black,
solid,
mark=x,
mark options={solid}
]
table[row sep=crcr]{
1 0.111111111111111\\
101 0.147504952985386\\
201 0.193047187402596\\
301 0.253360976833231\\
401 0.324591297256041\\
501 0.403809043566565\\
601 0.491138845325931\\
701 0.57136493794287\\
801 0.647995117162029\\
901 0.708542401360565\\
1001 0.756201939400039\\
1101 0.794312407406115\\
1201 0.822483969407978\\
1301 0.839734117358284\\
1401 0.858635583556846\\
1501 0.869162855505603\\
1601 0.873415384254304\\
1701 0.874706497647292\\
1801 0.874646894885086\\
1901 0.876227912695523\\
2001 0.875324965757099\\
2101 0.887772920001134\\
2201 0.893680842064123\\
2301 0.892030793816423\\
};
\addlegendentry{a=0.009};

\addplot [
color=black,
solid,
mark=o,
mark options={solid}
]
table[row sep=crcr]{
1 0.111111111111111\\
101 0.153214606700506\\
201 0.21308878743762\\
301 0.293103767908358\\
401 0.384831091495304\\
501 0.481078875789803\\
601 0.576323061290772\\
701 0.659177561951401\\
801 0.725270905630365\\
901 0.773235150047161\\
1001 0.813294947993423\\
1101 0.840227814574796\\
1201 0.859996392915117\\
1301 0.872422372995182\\
1401 0.878004637776836\\
1501 0.879107163590698\\
1601 0.883997394797421\\
1701 0.882950288125496\\
1801 0.895387125734983\\
};
\addlegendentry{a=0.01};

\end{axis}
\end{tikzpicture}%
  \caption{Probability of optimal path in eDLA-based algorithm for different learning rates in Graph2}
\label{figures.khalilissppall} 
 \end{figure}

\begin{figure}
%
%
%
%
\begin{tikzpicture}

\begin{axis}[%
width=4.52083333333333in,
height=3.565625in,
scale only axis,
xmin=0,
xmax=6000,
xlabel={Iteration},
ymin=0.1,
ymax=1,
ylabel={Optimal Path Probability},
axis x line*=bottom,
axis y line*=left,
legend style={at={(0.97,0.03)},anchor=south east,draw=black,fill=white,legend cell align=left}
]
\addplot [
color=black,
solid,
mark=o,
mark options={solid}
]
table[row sep=crcr]{
1 0.111151050712311\\
151 0.127150740043951\\
301 0.143422997290899\\
451 0.163171051881058\\
601 0.1826184847844\\
751 0.204596362624235\\
901 0.228444141967848\\
1051 0.256460047159602\\
1201 0.286933161351528\\
1351 0.320836558291256\\
1501 0.355861460797147\\
1651 0.39331199774696\\
1801 0.430193765115312\\
1951 0.466496903047999\\
2101 0.504675233765267\\
2251 0.539387830583106\\
2401 0.5780671097534\\
2551 0.610694229591995\\
2701 0.641051452963876\\
2851 0.669307492312202\\
3001 0.697931684269331\\
3151 0.722355820080857\\
3301 0.750871994711866\\
3451 0.775657132512436\\
3601 0.784450342018595\\
3751 0.790095366409968\\
3901 0.768440277141001\\
4051 0.784035229702247\\
4201 0.75953888922944\\
4351 0.760639321046932\\
4501 0.79180068608314\\
4651 0.804909579499304\\
4801 0.735762869218844\\
4951 0.683846992276683\\
5101 0.731891312297916\\
5251 0.783180428345404\\
5401 0.803524601520081\\
5551 0.835097237376031\\
5701 0.863302314275531\\
5851 0.876043648682361\\
};
\addlegendentry{a=0.003};

\addplot [
color=black,
solid,
mark=asterisk,
mark options={solid}
]
table[row sep=crcr]{
1 0.111316335717689\\
151 0.131361469123874\\
301 0.153825552142985\\
451 0.181068082625421\\
601 0.211285357516812\\
751 0.2442695321808\\
901 0.282133877672162\\
1051 0.32055823167004\\
1201 0.366869094361948\\
1351 0.416287916021344\\
1501 0.464387749186623\\
1651 0.512599347390167\\
1801 0.555621109714809\\
1951 0.597717385380754\\
2101 0.635681001608312\\
2251 0.664522883999642\\
2401 0.690953021504228\\
2551 0.705877940506134\\
2701 0.722176565341722\\
2851 0.703086217181922\\
3001 0.71327136703445\\
3151 0.722470944692786\\
3301 0.720203238415804\\
3451 0.713353542352343\\
3601 0.693553596691929\\
3751 0.699318554595586\\
3901 0.660443795490703\\
4051 0.648260385814048\\
4201 0.700964294297729\\
4351 0.71647652792325\\
4501 0.710816535296209\\
4651 0.784247351583061\\
4801 0.780380304085207\\
4951 0.849178775248606\\
5101 0.900576782155514\\
};
\addlegendentry{a=0.004};

\addplot [
color=black,
solid,
mark=square,
mark options={solid}
]
table[row sep=crcr]{
1 0.111122555437778\\
151 0.135781587222676\\
301 0.164479767178182\\
451 0.199999146772444\\
601 0.245650919584232\\
751 0.295088655847718\\
901 0.351039256350365\\
1051 0.416247783946693\\
1201 0.480477233732097\\
1351 0.543927348479769\\
1501 0.601293378722037\\
1651 0.638516470299023\\
1801 0.684085392208324\\
1951 0.727473944919661\\
2101 0.763085955494149\\
2251 0.777493783671366\\
2401 0.782723270492316\\
2551 0.791916982761852\\
2701 0.791117238385525\\
2851 0.798283708168474\\
3001 0.749807804203577\\
3151 0.812322526228054\\
3301 0.835314779474366\\
};
\addlegendentry{a=0.005};

\addplot [
color=black,
solid,
mark=x,
mark options={solid}
]
table[row sep=crcr]{
1 0.111597248432711\\
151 0.150058109537526\\
301 0.198814811887886\\
451 0.263779955157239\\
601 0.334181369191022\\
751 0.415965625304219\\
901 0.502273873669193\\
1051 0.587969684881596\\
1201 0.657979256557475\\
1351 0.693467890938976\\
1501 0.721462718218961\\
1651 0.737201784596216\\
1801 0.765182968061237\\
1951 0.760039170607352\\
2101 0.748667894823392\\
2251 0.778095110326089\\
2401 0.75815248068656\\
2551 0.81743583481822\\
2701 0.754901774265192\\
2851 0.75056100056742\\
3001 0.699763917874202\\
3151 0.761484623543995\\
3301 0.777306640258125\\
3451 0.784600892722871\\
3601 0.795181694123591\\
3751 0.790596278351171\\
3901 0.796422413470065\\
4051 0.827029765636996\\
4201 0.838594361829223\\
4351 0.868052595322911\\
4501 0.880484525760339\\
4651 0.868185277295701\\
4801 0.830999487942538\\
4951 0.88134396983964\\
5101 0.890590181124346\\
5251 0.880038912089782\\
5401 0.848970232284687\\
5551 0.851409494439011\\
5701 0.868305612677393\\
};
\addlegendentry{a=0.007};

\end{axis}
\end{tikzpicture}%
\caption{Probability of optimal path in DLA-based algorithm for different learning rates in Graph2}
\label{figures.beigyall}
\end{figure}
\begin{figure}
%
%
%
%
\begin{tikzpicture}

\begin{axis}[%
width=4.52083333333333in,
height=3.565625in,
scale only axis,
xmin=0,
xmax=6000,
xlabel={Iteration},
ymin=0.1,
ymax=0.9,
ylabel={Optimal Path Probability},
axis x line*=bottom,
axis y line*=left,
legend style={at={(0.97,0.03)},anchor=south east,draw=black,fill=white,legend cell align=left}
]
\addplot [
color=black,
solid,
mark=o,
mark options={solid}
]
table[row sep=crcr]{
1 0.111111111111111\\
151 0.126745255644033\\
301 0.14734935413361\\
451 0.171297141788272\\
601 0.197780315981089\\
751 0.227773875091583\\
901 0.261552536188057\\
1051 0.299053768573308\\
1201 0.339263815453853\\
1351 0.37894417267084\\
1501 0.421433151231234\\
1651 0.467279090151664\\
1801 0.509784470239245\\
1951 0.550698720361289\\
2101 0.589097546914585\\
2251 0.624914280745062\\
2401 0.659759497427793\\
2551 0.690963386463071\\
2701 0.717935406868984\\
2851 0.741367658334342\\
3001 0.763143584481626\\
3151 0.781936414302958\\
3301 0.799036719848341\\
3451 0.814976073722672\\
3601 0.828237849637138\\
3751 0.839138435240317\\
3901 0.850328374939038\\
4051 0.859853947444202\\
4201 0.867422441882868\\
4351 0.873186343423007\\
4501 0.877175086239974\\
4651 0.881275674328729\\
4801 0.887345387908807\\
4951 0.890446122285143\\
5101 0.892443381845852\\
5251 0.890981135333009\\
5401 0.895118475679882\\
5551 0.896695103233126\\
5701 0.899839183797501\\
};
\addlegendentry{Proposed};

\addplot [
color=black,
solid,
mark=square,
mark options={solid}
]
table[row sep=crcr]{
1 0.111151050712311\\
151 0.127150740043951\\
301 0.143422997290899\\
451 0.163171051881058\\
601 0.1826184847844\\
751 0.204596362624235\\
901 0.228444141967848\\
1051 0.256460047159602\\
1201 0.286933161351528\\
1351 0.320836558291256\\
1501 0.355861460797147\\
1651 0.39331199774696\\
1801 0.430193765115312\\
1951 0.466496903047999\\
2101 0.504675233765267\\
2251 0.539387830583106\\
2401 0.5780671097534\\
2551 0.610694229591995\\
2701 0.641051452963876\\
2851 0.669307492312202\\
3001 0.697931684269331\\
3151 0.722355820080857\\
3301 0.750871994711866\\
3451 0.775657132512436\\
3601 0.784450342018595\\
3751 0.790095366409968\\
3901 0.768440277141001\\
4051 0.784035229702247\\
4201 0.75953888922944\\
4351 0.760639321046932\\
4501 0.79180068608314\\
4651 0.804909579499304\\
4801 0.735762869218844\\
4951 0.683846992276683\\
5101 0.731891312297916\\
5251 0.783180428345404\\
5401 0.803524601520081\\
5551 0.835097237376031\\
5701 0.863302314275531\\
5851 0.876043648682361\\
};
\addlegendentry{Old};

\end{axis}
\end{tikzpicture}%
\caption{The POP curves in new proposed method and old DLA-based for SSPP in Graph2(learning rate= 0.003)}
\label{figures.comparesspp}
\end{figure}
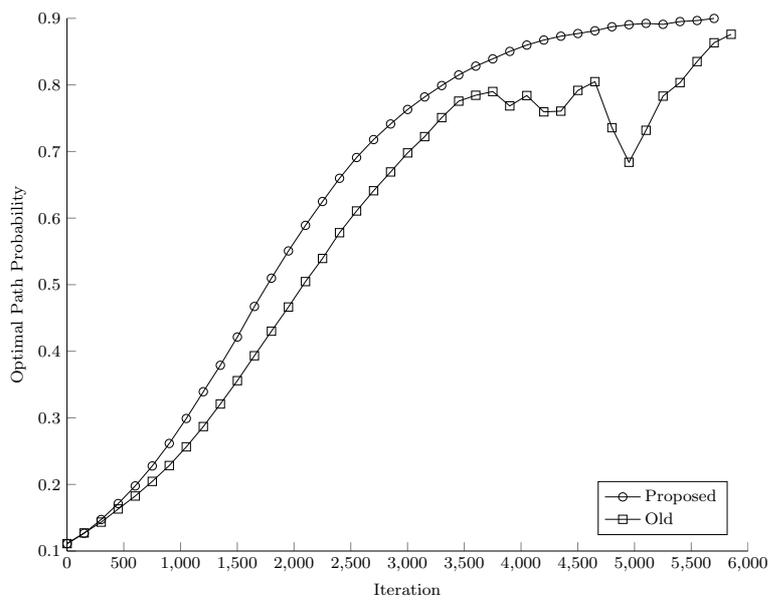
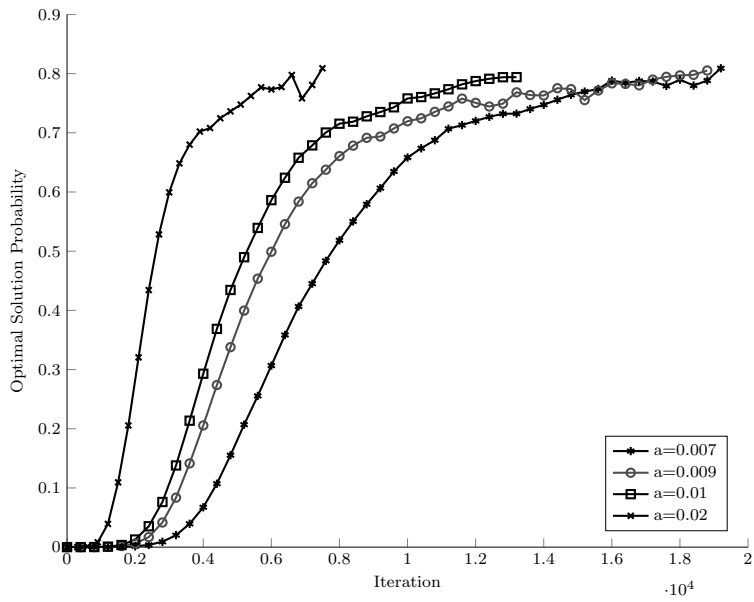
\begin{figure}
%
%
%
%
\begin{tikzpicture}

\begin{axis}[%
width=4.52083333333333in,
height=3.565625in,
scale only axis,
xmin=0,
xmax=20000,
xlabel={Iteration},
ymin=0,
ymax=0.9,
ylabel={Optimal Solution Probability},
axis x line*=bottom,
axis y line*=left,
legend style={at={(0.97,0.03)},anchor=south east,draw=black,fill=white,legend cell align=left}
]
\addplot [
color=black,
solid,
line width=1.0pt,
mark=asterisk,
mark options={solid}
]
table[row sep=crcr]{
1 3.34897976680384e-07\\
401 4.12851087261891e-06\\
801 2.86754460037536e-05\\
1201 0.000141900630969914\\
1601 0.000493131131691596\\
2001 0.0013954388479979\\
2401 0.00378404241020865\\
2801 0.00900902505451733\\
3201 0.0202287464668238\\
3601 0.0394130349974844\\
4001 0.0673083518198828\\
4401 0.107148884871234\\
4801 0.15534555073766\\
5201 0.206928747277437\\
5601 0.255471652897607\\
6001 0.306716253689218\\
6401 0.358453414207234\\
6801 0.406523653718021\\
7201 0.445100050054432\\
7601 0.483681579104976\\
8001 0.518444460543239\\
8401 0.550283094140485\\
8801 0.579305022673501\\
9201 0.606525227170528\\
9601 0.63467003181701\\
10001 0.658247767327248\\
10401 0.673867059407442\\
10801 0.687462228931241\\
11201 0.706835376312058\\
11601 0.7129973194098\\
12001 0.720109134696632\\
12401 0.726955303659614\\
12801 0.732144095197589\\
13201 0.732361928238244\\
13601 0.740327058681442\\
14001 0.747363993437415\\
14401 0.755833311208308\\
14801 0.764115526975794\\
15201 0.769590230912693\\
15601 0.77403972905921\\
16001 0.788197824050526\\
16401 0.784557483316199\\
16801 0.787609944937749\\
17201 0.786609150900472\\
17601 0.779324656357832\\
18001 0.789770750587207\\
18401 0.779807722146058\\
18801 0.788343170356646\\
19201 0.809012098465358\\
};
\addlegendentry{a=0.007};

\addplot [
color=gray!60!black,
solid,
line width=1.0pt,
mark=o,
mark options={solid}
]
table[row sep=crcr]{
1 3.34897976680384e-07\\
401 9.21735062334292e-06\\
801 8.91610916900099e-05\\
1201 0.000498162168183083\\
1601 0.00184241254777645\\
2001 0.00609263739799604\\
2401 0.0169553722683507\\
2801 0.041634602557218\\
3201 0.083620341937171\\
3601 0.141483323318638\\
4001 0.205586295064199\\
4401 0.273884743887512\\
4801 0.337870800580237\\
5201 0.399622916858577\\
5601 0.453618129058612\\
6001 0.499054882396754\\
6401 0.545719383973214\\
6801 0.583803250840114\\
7201 0.614785783847718\\
7601 0.637664796109857\\
8001 0.660713468686709\\
8401 0.678319779361356\\
8801 0.691495053799353\\
9201 0.693668105994568\\
9601 0.707352518687279\\
10001 0.719509909827557\\
10401 0.724486073253296\\
10801 0.735376610669943\\
11201 0.744527134282203\\
11601 0.75777982819966\\
12001 0.750618666695347\\
12401 0.744410920931109\\
12801 0.749284565148059\\
13201 0.768404315215925\\
13601 0.763719687330132\\
14001 0.763020781693661\\
14401 0.775355009019101\\
14801 0.773798710186875\\
15201 0.75523875040698\\
15601 0.77147064039139\\
16001 0.78351352862806\\
16401 0.782556603971792\\
16801 0.780138516540053\\
17201 0.789906518933804\\
17601 0.794545230189525\\
18001 0.797051935512777\\
18401 0.797917316899293\\
18801 0.805263718663956\\
};
\addlegendentry{a=0.009};

\addplot [
color=black,
solid,
line width=1.0pt,
mark=square,
mark options={solid}
]
table[row sep=crcr]{
1 3.34897976680384e-07\\
401 1.24180421360532e-05\\
801 0.000139139228844554\\
1201 0.000791749364092676\\
1601 0.00354561635217511\\
2001 0.0128321916665138\\
2401 0.0352836693200567\\
2801 0.0762600726760442\\
3201 0.138010353875981\\
3601 0.213635036789568\\
4001 0.292967898860457\\
4401 0.368708475231076\\
4801 0.434461402397646\\
5201 0.489826400792369\\
5601 0.539275551211236\\
6001 0.586223965955238\\
6401 0.624053683744141\\
6801 0.657827270117652\\
7201 0.678751552131665\\
7601 0.700483233186986\\
8001 0.715343161917921\\
8401 0.718744213275032\\
8801 0.727811223760416\\
9201 0.735066396617926\\
9601 0.743076770757846\\
10001 0.758035699717323\\
10401 0.760362704285421\\
10801 0.766568887586689\\
11201 0.77355430013305\\
11601 0.781874742066999\\
12001 0.787370653491496\\
12401 0.791253475459276\\
12801 0.793903773883456\\
13201 0.794020343489613\\
};
\addlegendentry{a=0.01};

\addplot [
color=black,
solid,
line width=1.0pt,
mark=x,
mark options={solid}
]
table[row sep=crcr]{
1 3.34897976680384e-07\\
301 5.73338123300809e-05\\
601 0.001074571567878\\
901 0.008203694539561\\
1201 0.0392122111194932\\
1501 0.109434876241114\\
1801 0.205552174001043\\
2101 0.320575260462467\\
2401 0.43432382502284\\
2701 0.528262921358092\\
3001 0.599398166065634\\
3301 0.648270138150579\\
3601 0.680056637048028\\
3901 0.702127409329046\\
4201 0.708014794237662\\
4501 0.724596312819104\\
4801 0.736227114773899\\
5101 0.747730587150078\\
5401 0.762233613914635\\
5701 0.776856700957435\\
6001 0.773261498137945\\
6301 0.777296567122222\\
6601 0.797600785613711\\
6901 0.758204174609647\\
7201 0.780772039066647\\
7501 0.808980614253245\\
};
\addlegendentry{a=0.02};

\end{axis}
\end{tikzpicture}%
\caption{Probability of optimal spanning tree in eDLA-based algorithm for different learning rates in Alex1-a}
\label{figures.khaliliallpopsmstp}
\end{figure}
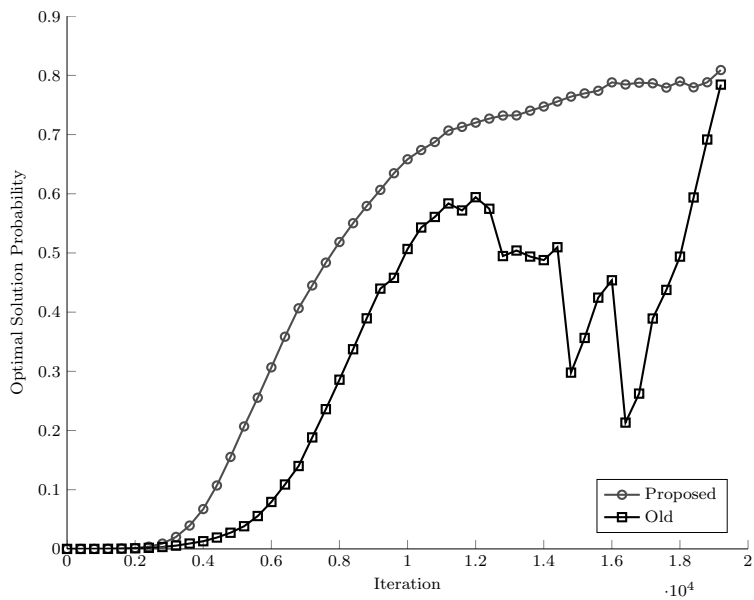
\begin{figure}
%
%
%
%
\begin{tikzpicture}

\begin{axis}[%
width=4.52083333333333in,
height=3.565625in,
scale only axis,
xmin=0,
xmax=20000,
xlabel={Iteration},
ymin=0,
ymax=0.9,
ylabel={Optimal Solution Probability},
axis x line*=bottom,
axis y line*=left,
legend style={at={(0.97,0.03)},anchor=south east,draw=black,fill=white,legend cell align=left}
]
\addplot [
color=gray!60!black,
solid,
line width=1.0pt,
mark=o,
mark options={solid}
]
table[row sep=crcr]{
1 3.34897976680384e-07\\
401 4.12851087261891e-06\\
801 2.86754460037536e-05\\
1201 0.000141900630969914\\
1601 0.000493131131691596\\
2001 0.0013954388479979\\
2401 0.00378404241020865\\
2801 0.00900902505451733\\
3201 0.0202287464668238\\
3601 0.0394130349974844\\
4001 0.0673083518198828\\
4401 0.107148884871234\\
4801 0.15534555073766\\
5201 0.206928747277437\\
5601 0.255471652897607\\
6001 0.306716253689218\\
6401 0.358453414207234\\
6801 0.406523653718021\\
7201 0.445100050054432\\
7601 0.483681579104976\\
8001 0.518444460543239\\
8401 0.550283094140485\\
8801 0.579305022673501\\
9201 0.606525227170528\\
9601 0.63467003181701\\
10001 0.658247767327248\\
10401 0.673867059407442\\
10801 0.687462228931241\\
11201 0.706835376312058\\
11601 0.7129973194098\\
12001 0.720109134696632\\
12401 0.726955303659614\\
12801 0.732144095197589\\
13201 0.732361928238244\\
13601 0.740327058681442\\
14001 0.747363993437415\\
14401 0.755833311208308\\
14801 0.764115526975794\\
15201 0.769590230912693\\
15601 0.77403972905921\\
16001 0.788197824050526\\
16401 0.784557483316199\\
16801 0.787609944937749\\
17201 0.786609150900472\\
17601 0.779324656357832\\
18001 0.789770750587207\\
18401 0.779807722146058\\
18801 0.788343170356646\\
19201 0.809012098465358\\
};
\addlegendentry{Proposed};

\addplot [
color=black,
solid,
line width=1.0pt,
mark=square,
mark options={solid}
]
table[row sep=crcr]{
1 3.34711119369494e-07\\
401 5.33891804105005e-06\\
801 3.74130181700562e-05\\
1201 0.000161844494929389\\
1601 0.000458845173585486\\
2001 0.00103960030727002\\
2401 0.0018079166300697\\
2801 0.00323033869701837\\
3201 0.00555169972676764\\
3601 0.00908951163761367\\
4001 0.0129899075944441\\
4401 0.0192042282706784\\
4801 0.0273164266506076\\
5201 0.0383220546982353\\
5601 0.0555510322464302\\
6001 0.079306416350305\\
6401 0.108807330482693\\
6801 0.140036146945418\\
7201 0.188355465578228\\
7601 0.236090048728508\\
8001 0.285873865020949\\
8401 0.337322349447056\\
8801 0.389544354834942\\
9201 0.439714396329893\\
9601 0.457962994174392\\
10001 0.506685314811637\\
10401 0.542962953594824\\
10801 0.560822180880179\\
11201 0.583639539842688\\
11601 0.571776389664792\\
12001 0.594190064744931\\
12401 0.574641656982527\\
12801 0.494748746848123\\
13201 0.504103246608843\\
13601 0.493903937872525\\
14001 0.487918820074886\\
14401 0.509692219027978\\
14801 0.297819182494434\\
15201 0.356431467754914\\
15601 0.424453260025615\\
16001 0.45384553335644\\
16401 0.21342321968782\\
16801 0.262372560527152\\
17201 0.389232436445517\\
17601 0.437558306214332\\
18001 0.493796996576047\\
18401 0.593749660419928\\
18801 0.691617537052403\\
19201 0.784382370058318\\
};
\addlegendentry{Old};

\end{axis}
\end{tikzpicture}%
\caption{The curves of probability of optimal spanning tree  in new proposed method and old DLA-based and LA based for SMSTP(learning rate= 0.07) in Alex1-a}\label{figures.akbarikhalilicmp}
\end{figure}
\endgroup
\section{Conclusion:}
In this paper eDLA as a new method for cooperating of learning automaton is introduced.  In this network each learning automata has an activity level that changes over the time according to the underlying problem and a set of communication rules. At each step of algorithm only one learning automata has high level so-called as Fire.\\
A new adaptive procedure based on the eDLA is introduced to solving optimization problems in stochastic graphs. The algorithm presented  provides policy that can be used to determine set of edges to be sampled. As a result of that the algorithm determines a sub-graph with optimal expected weight. \\
The convergence and optimality of the proposed algorithm is proved. Two set of experiments are done to solving stochastic shortest path and stochastic minimum spanning tree problems. The results indicate the superiority of the proposed algorithm compared to previous automata based algorithms. Further more, it seems that the new proposed threshold value is useful to prevent network optimization from becoming stuck in local optima and improves the convergence of algorithm significantly.

  A new variance aware method for computation of threshold value is introduced that improves the accuracy and convergence rate of the proposed algorithm.\\
The convergence of the proposed algorithm shown.
    

\bibliographystyle{spmpsci}      
\bibliography{mycites}   

\begin{thebibliography}{10}
\providecommand{\url}[1]{{#1}}
\providecommand{\urlprefix}{URL }
\expandafter\ifx\csname urlstyle\endcsname\relax
  \providecommand{\doi}[1]{DOI~\discretionary{}{}{}#1}\else
  \providecommand{\doi}{DOI~\discretionary{}{}{}\begingroup
  \urlstyle{rm}\Url}\fi

\bibitem{A.Alipour2005}
A.Alipour, M.R.Meybodi: {Solving Traveling Salesman Problem Using Distributed
  Learning Automata}.
\newblock In: 10th Annual CSI Computer Conference, pp. 759--761. Tehran,Iran
  (2005)

\bibitem{AkbariTorkestani2011}
{Akbari Torkestani}, J., Meybodi, M.R.: {Learning automata-based algorithms for
  solving stochastic minimum spanning tree problem}.
\newblock Applied Soft Computing \textbf{11}(6), 4064--4077 (2011).
\newblock \doi{10.1016/j.asoc.2011.02.017}.
\newblock
  \urlprefix\url{http://linkinghub.elsevier.com/retrieve/pii/S1568494611000779}

\bibitem{Anari2007}
Anari, B., Meybodi, M.R.: {A Method based on distributed learning automata for
  determining web documents structure}.
\newblock In: 12th Annual CSI Computer Conference of Iran, pp. 2276--2282
  (2007)

\bibitem{BaradaranHashemi2007}
BaradaranHashemi, A., Meybodi, M.: {Web Usage Mining Using Distributed Learning
  Automata}.
\newblock In: 12th Annual CSI Computer Conference of Iran, pp. 553--560.
  Tehran,Iran (2007)

\bibitem{Beigy2003}
Beigy, H., Meybodi, M.: {Solving Stochastic Shoretst Path Problem Using Monte
  Carlo Sampling Method: A Distributed Learning Automata Approach}.
\newblock Springer-Verlag Lecture Notes in Advances in Soft Computing: Neural
  Networks and Soft Computing pp. 626--632 (2003)

\bibitem{H.Beigy2006}
H.Beigy, M.R.Meybodi: {Utilizing distributed learning automata to solve
  stochastic shortest path problems}.
\newblock International Journal of Uncertainty, \ldots \textbf{14}(5), 591--615
  (2006).
\newblock
  \urlprefix\url{http://www.worldscientific.com/doi/abs/10.1142/S0218488506004217}

\bibitem{Hutson}
Hutson, K.R., Shier, D.R.: {Online Supplement to “ Bounding Distributions for
  the Weight of a Minimum Spanning Tree in Stochastic Networks ”} pp. 1--12.
\newblock
  \urlprefix\url{http://www.math.clemson.edu/~shierd/Shier/appendix1.pdf}

\bibitem{Lakshmivarahan1976}
Lakshmivarahan, S., Thathachar, M.: {Bounds on the Convergence Probabilities of
  Learning Automata}.
\newblock IEEE Transactions on Systems, Man, and Cybernetics - Part A: Systems
  and Humans \textbf{6}(11), 756--763 (1976)

\bibitem{Meybodi2002}
Meybodi, M.R., Beigy, H.: {A Sampling Method Based on Distributed Learning
  Automata}.
\newblock In: the 10th iranian conference on Electrical Engineering, vol.~I
  (2002)

\bibitem{MollakhaliliMeybodi2008}
MollakhaliliMeybodi, M., Meybodi, M.: {Link Prediction in Adaptive Web Sites
  Using Distributed Learning Automata,}.
\newblock In: 13th Annual CSI Computer Conference of Iran. Kish Island (2008)

\bibitem{MollakhaliliMeybodi2012}
MollakhaliliMeybodi, M., Meybodi, M.R.: {A Distributed Learning Automata Based
  Approach for User Modeling in Adaptive Hypermedia}.
\newblock In: Congress on Electrical, Computer and Information Technology.
  Mashhad (2012)

\bibitem{MollakhaliliMeybodi2004}
MollakhaliliMeybodi, M., M.R.Meybodi: {A New Distributed Learning Automata
  Based Algorithm for Solving Stochastic Shortest Path}.
\newblock In: 6th Conference on Intelligent Systems. kerman (2004)

\bibitem{Motevalian2006}
Motevalian, A., Meybodi, M.: {Solving Maximal Independent Set Problem Using
  Distributed Learning Automata}.
\newblock In: 14th Iranian Electrical Engineering Conference(ICEE2006), vol.~1.
  Tehran,Iran (2006)

\bibitem{Narendra1974}
Narendra, K.S., Thathachar, M.A.L.: {Learning Automata: A Survey}.
\newblock IEE Tansactions on systems, man, and cyberentics \textbf{SMC-14}(4),
  323--334 (1974)

\bibitem{Norman1968}
Norman, F.: {On the Linear Model with Two Absorbing}.
\newblock Journal of Mathematical Psychology \textbf{5}, 225--241 (1968)

\bibitem{Papoulis1991}
Papoulis, A.: {Probability, Random Variables, and Stochastic Processes}, 3 edn.
\newblock McGrawHill, New York, USA (1991)

\bibitem{Ross2004}
Ross, S.M.: {Introduction to Probability and Statistics dor Engineers and
  Scientists}, third edn.
\newblock Elsevier Academic Press (2004)

\bibitem{S.1989}
S., N.K., L, T.M.: {Learning Automata: an Introduction}.
\newblock Prentice Hall (1989)

\bibitem{Saati2005}
Saati, S., Meybodi, M.: {A Self Organizing Model for Document Structure Using
  Distributed Learning Automata}.
\newblock In: Second International Conference on Information and Knowledge
  Technology (IKT2005) (2005)

\bibitem{Sato1999}
Sato, T.: {On Some Asymptotic Properties of Learning Automaton Networks}.
\newblock Tech. rep. (1999).
\newblock
  \urlprefix\url{http://citeseerx.ist.psu.edu/viewdoc/summary?doi=10.1.1.41.7669}

\bibitem{Thathachar2002}
Thathachar, M.L., Sastry, P.S.: {Varieties of learning automata: an overview.}
\newblock IEEE transactions on systems, man, and cybernetics. Part B,
  Cybernetics : a publication of the IEEE Systems, Man, and Cybernetics Society
  \textbf{32}(6), 711--22 (2002).
\newblock \doi{10.1109/TSMCB.2002.1049606}.
\newblock \urlprefix\url{http://www.ncbi.nlm.nih.gov/pubmed/18244878}

\bibitem{ThathacharMALandHarita1987}
{Thathachar , MAL and Harita}, B.: {Learning automata with changing number of
  actions}.
\newblock IEEE Transactions on Systems, Man, and Cybernetics - Part A: Systems
  and Humans \textbf{17}(6), 1095--1100 (1987).
\newblock \urlprefix\url{http://portal.acm.org/citation.cfm?id=40875.40898}

\end{thebibliography}
\end{document}